\def\colorful{1}
\def\Comment#1{\textsl{$\langle\!\langle$#1\/$\rangle\!\rangle$}}
\newtheorem*{theorem*}{Theorem}
\newtheorem{lemma}{Lemma}[section]
\newtheorem{theorem}[lemma]{Theorem}
\theoremstyle{definition}
\newtheorem{definition}[lemma]{Definition}
\renewenvironment{proof}{\vspace{-0.1in}\noindent{\bf Proof:}}%
        {\hspace*{\fill}$\Box$\par}
        {\hspace*{\fill}$\Box$\par}
        {\hspace*{\fill}$\Box$\par}
\def\ceil#1{\lceil {#1} \rceil}
\def\Var{\ensuremath{\mathrm{\mathbf{Var}}}}
\def\E{\ensuremath{\mathrm{\mathbf{E}}}}
\def\Pr{\ensuremath{\mathrm{\mathbf{Pr}}}}
\def\AA{\mathcal{A}}
\def\PP{\mathcal{P}}
\def\UU{\mathcal{U}}
\def\MM{\mathcal{M}}
\def\accept{{\fontfamily{cmss}\selectfont accept}\xspace}
\def\reject{{\fontfamily{cmss}\selectfont reject}\xspace}
\def\col{\ensuremath{\mathrm{collisions}}}
\def\Lap{\ensuremath{\mathrm{\mathbf{Lap}}}}
\newif\ifobviousProofset
\newcommand{\obviousProof}[1]{}
\renewcommand{\obviousProof}[1]{
  \newline
  \textcolor{blue}{
  	\textbf {Proof for our own record:} #1
  	{\hspace*{\fill}$\Box$}
  	\newline
  }
}
\newcommand*{\rom}[1]{\expandafter\@slowromancap\romannumeral #1@}
\begin{document}
\title{Differentially Private Identity and Closeness Testing of Discrete Distributions}
\author{
Maryam Aliakbarpour\thanks{Supported by NSF grants CCF-1420692 and  CCF-1650733.} \\
CSAIL, MIT\\
\texttt{maryama@mit.edu} \\
\And
Ilias Diakonikolas\thanks{Supported by NSF Award CCF-1652862 (CAREER) and a Sloan Research Fellowship.}\\
CS, USC\\
\texttt{diakonik@usc.edu}
\And
Ronitt Rubinfeld \thanks{Supported by ISF grant 1536/14, NSF grants CCF-1420692 and  CCF-1650733.} \\
CSAIL, MIT \& TAU\\
\texttt{ronitt@csail.mit.edu} \\
}

\date{\today}
\maketitle

\begin{abstract}
We investigate the problems of identity and closeness testing over a discrete population
from random samples. Our goal is to develop efficient testers while guaranteeing
Differential Privacy to the individuals of the population. We describe an approach that 
yields sample-efficient differentially private testers for these problems.
Our theoretical results show that there exist private identity and closeness testers
that are nearly as sample-efficient as their non-private counterparts. We perform
an experimental evaluation of our algorithms on synthetic data. Our experiments
illustrate that our private testers achieve small type I and type II errors with sample size
{\em sublinear} in the domain size of the underlying distributions.
\end{abstract}

\section{Introduction} \label{sec:intro}
We consider the problem of finding sample-efficient
algorithms that allow us to
understand properties of distributions over very large discrete domains.
Such statistical tests have been traditionally studied in statistics
because of their importance in virtually every scientific endeavor that
involves data.
Recent work in the theoretical computer science community has investigated
the case when the discrete domains are large and no a priori assumptions
can be made about the distribution (for example, when it cannot be
assumed that the distribution is normal, Gaussian, or even smooth).
Optimal
methods with sublinear sample complexity
have been given for testing such properties as whether a distribution
is uniform, 
identical to a known distribution
(aka testing ``goodness-of-fit''), closeness
of two unknown distributions, and independence.

While statistical tests are very important for advancing science,
when they are performed on sensitive data representing specific individuals,
such as data  describing medical or other behavioral phenomena, 
it may be that
the outcomes of the tests reveal information that should not be divulged. 
Techniques from differential privacy give us hope that one may
obtain the scientific benefit of statistical tests without 
compromising the privacy of the individuals in the study.
Specifically, differential privacy requires 
that similar datasets have statistically close outputs -- once this
property is achieved, then provable privacy guarantees can be made.
Differential privacy is a rich and active area of study, in which
techniques have been developed 
and applied to give private algorithms for a range of data analysis tasks.

\medskip

\noindent {\bf Our Contributions} In this paper, we study the problem of hypothesis testing
in the presence of privacy constraints, focusing on the notion of differential privacy~\cite{DworkR14}.
Our emphasis is on the {\em sublinear} regime, i.e., when the number of
samples available is sublinear in the domain size of the underlying distribution(s). 
We leverage recent progress in distribution property 
testing to obtain sample-efficient {\em private} 
algorithms for the problems of testing the identity and closeness of discrete distributions.
The main conceptual message of our results is that
we can achieve differential privacy 
with only a small increase in the sample complexity 
compared to the non-private case. 
We provide sample-efficient testers for identity to a fixed distribution (goodness-of-fit)
and equivalence/closeness between two unknown distributions (both
given by samples). 
For the latter problem, we are the first to give such sample-efficient
testers with provable privacy guarantees. Our experimental evaluation on synthetic data
illustrates that our testers achieve small type I and type II errors with a sublinear number of samples
when the domain size is large.

\medskip

\noindent {\bf Technical Overview}
We now provide a brief overview of our approach.
We start by observing that there is a simple generic method
to convert a non-private tester into a 
private tester with a multiplicative overhead
in the sample complexity. This method is well-known in differential privacy,
but for the sake of completeness we describe it in Section~\ref{sec:generic}.
It is useful to contrast
the sample complexity of the generic method with the (substantially smaller)
sample complexity of our testers in Sections~\ref{sec:identity},~\ref{sec:uniformity}, and~\ref{sec:closeness}.
For convenience, throughout this paper, we work with testing algorithms
that have failure probability at most $1/3$. As we point out in Section~\ref{sec:amp}, 
this is without loss of generality: a standard amplification method shows that
we can always achieve error probability $\delta$ at the expense of a $\log(1/\delta)$
multiplicative factor in the sample complexity, even in the differentially private setting.

Our results for identity testing follow a modular
approach: First, we use a recently discovered black-box reduction of identity testing
to uniformity testing proposed by Goldreich in~\cite{Goldreich16}, building on the result of Diakonikolas and Kane~\cite{DK:16}.
We point out (Section~\ref{sec:identity}) that this reduction also applies in the private setting. 
As a corollary, we can translate any private uniformity tester to a private identity tester 
without increasing the sample size by more than a small constant factor. 
It remains to develop sample-efficient private uniformity testers. 
We develop two such private methods (Section~\ref{sec:uniformity}):
Our first method is a private version of Paninski's uniformity tester~\cite{Paninski:08}, 
which relies on the number of domain elements that appear in the sample exactly once.
This statistic has low sensitivity, allowing an easy translation to the private setting.
The sample complexity of our aforementioned uniformity tester is
$$O(\sqrt{n}/\epsilon^2 +  \sqrt n/(\epsilon \sqrt \xi)) \;,$$
where $\epsilon$ is the accuracy of the tester and $\xi$ is the privacy parameter.
As our experimental results illustrate, this private tester
performs very well in the sublinear regime.

On the other hand, it is well-known that Paninski's uniformity tester only works
when the sample size is smaller than the domain size (even in the non-private setting).
To obtain a uniformity tester that works for the entire setting of parameters, 
we develop our second method: a private version of the collisions-based tester 
first proposed by Goldreich and Ron~\cite{GR00}. The collisions-based tester was recently
shown to be sample-optimal in the non-private setting~\cite{DiakonikolasGPP16}.
The main difficulty in turning this into a private tester is that the underlying statistic (number of collisions)
has very high worst-case sensitivity. 
To overcome this issue, we add a simple pre-processing step to our tester that rejects
when there is a single element that appears many times in the sample.
(We note that a similar idea was independently used in~\cite{CaiDK17}, though the details are somewhat different.)
This allows us to reduce the effective sensitivity of our statistic and yields a sample-efficient private tester.
Specifically, the sample complexity of our collision-based private tester is
$$\tilde{O} \left(\sqrt{n}/\epsilon^2 +\sqrt {n}/(\epsilon \xi) + 1/(\epsilon^2  \xi) \right) \;.$$
For the problem of closeness testing, we build on the chi-square type optimal tester provided by Chan {\em et al.}~\cite{CDVV14}.
A major advantage of this statistic is that it has constant sensitivity. 
Hence, developing a sample-efficient private version can be achieved
by adding Laplace noise. A careful analysis shows that this noisy statistic
is still accurate without substantially increasing the sample complexity.
Specifically, the sample complexity of our private closeness tester is
$$O \left( \max \left\{ \sqrt n/\epsilon^2,  
n^{2/3}/\epsilon^{4/3}, \sqrt n / (\sqrt{\xi} \epsilon), 1/(\xi \epsilon^2) \right\} \right) \;.$$

\medskip

\noindent {\bf Related Work}
During the past two decades,  {\em distribution property testing}~\cite{BFR+:00}
-- whose roots lie in statistical hypothesis testing~\cite{NeymanP, lehmann2005testing} --
has received considerable attention by the computer science community,
see~\cite{Rub12, Canonne15} for two recent surveys.
The majority of the early work in this field has focused on characterizing the sample size needed to test properties
of arbitrary distributions of a given support size. After two decades of study, this ``worst-case''
regime is well-understood: for many properties of interest there exist
sample-optimal testers (matched by information-theoretic lower bounds)
~\cite{Paninski:08, DDSVV13, CDVV14, VV14, DKN:15, DKN:15:FOCS, ADK15, DK:16, CDGR16, 
DiakonikolasGPP16, CanonneDKS16, DKN17}.

A recent line of work~\cite{WangLK15, RogersVLG16, KiferR16, CaiDK17} has studied distribution 
testing with privacy constraints. The majority of these works~\cite{WangLK15, RogersVLG16, KiferR16}
focus on type I error analysis subject to privacy guarantees. Most relevant to ours is the very recent
work by Cai {\em et al.}~\cite{CaiDK17} that provides an identity tester with provable sample guarantees and 
bounded type I and type II errors.
Their sample upper bounds for identity testing are comparable to ours
(though quantitatively worse in some parameter settings). We also note that 
Cai {\em et al.}~\cite{CaiDK17} do not consider the more general problem of closeness testing.
Finally, recent work of Diakonikolas {\em et al.}~\cite{DiakonikolasHS15} has provided differentially private algorithms for learning
various families of discrete distributions. For the case of unstructured discrete distributions,
such algorithms inherently require sample size at least linear in the domain size, even for constant values of the 
approximation parameter.

\section {Preliminaries} \label{sec:prelims}

\noindent {\bf Notation and Basic Definitions.}
We use $[n]$ to denote the set $\{1, 2,\ldots, n\}$. 
We say $p$ is a discrete distribution over $[n]$ if $p:[n] \rightarrow [0,1]$ is a function such that $\sum_{i=1}^n p(i) = 1$, where $p(i)$ denotes the probability of element $i$ according to distribution $p$. 
For a set $S \subset [n]$, $p(S)$ denotes the total probability of the elements in $S$ (i.e., $p(S) = \sum_{i \in S} p(i)$).
For any integer $k > 0$, the $\ell^k$-norm of $p$ is equal to $\left(\sum_{i=1}^n |p(i)|^k \right)^{\frac 1 k}$, 
and it is denoted by $\|p\|_k$. The $\ell^k$-distance between two distributions $p$ and $q$ over $[n]$ 
is equal to $\left(\sum_{i=1}^n |p(i) - q(i)|^k\right)^{\frac 1 k }$. 
We use $\Lap(b)$ to denote a random variable that is drawn from a Laplace distribution with parameter $b$ and mean zero. 

The problem of {\em identity testing} (or goodness-of-fit) is the following: Given sample access
to an unknown distribution $p$ over $[n]$ and an explicit distribution $q$ over $[n]$, 
we want to distinguish, with probability at least $2/3$, between the cases that $p=q$ (completeness) and
$\|p-q\|_1 \geq \epsilon$ (soundness). The special case of this problem when $q = U_n$, the uniform distribution
over $[n]$, is called {\em uniformity testing}. The generalization of identity testing when both 
$p$ and $q$ are unknown and only accessible via samples is called {\em closeness testing.}

\noindent {\bf Differential Privacy.} 
In our context, a dataset is a multiset of samples drawn from a distribution over $[n]$.
We say that $X$ and $Y$ are {\em neighboring datasets} if they differ in exactly one element. 

\begin{definition} 
A randomized algorithm $\AA:[n]^s \rightarrow \mathcal{R}$, is
\em{$\xi$-differentially private} if for any $S \subseteq \mathcal{R}$ 
and any neighboring datasets $X, Y$, we have that
$\Pr[\AA(X) \in S] \leq e^\xi \cdot \Pr[\AA(Y) \in S] \;.$
\end{definition}
We will say that a tester is $(\epsilon, \xi)$-private, to mean that $\epsilon$ 
is the accuracy parameter, $\xi$ is the privacy parameter, and the tester outputs the right answer with probability at least $2/3$. 
For conciseness, we use the term $\xi$-private instead of $\xi$-differentially private. 
We provide more details about general techniques in differential privacy in Appendix \ref{sec:privacy_prelims}.

\section{Private Identity Testing: Reduction to Private Uniformity Testing} \label{sec:identity}
In this section, we provide a simple black-box reduction
of private identity testing (against an arbitrary fixed distribution)
to its special case of private uniformity testing.
Specifically, we point out that a recent reduction of (non-private)
identity testing to (non-private) uniformity testing works
in the private setting as well. 

In recent work~\cite{Goldreich16}, Goldreich provides a 
reduction from identity testing to uniformity testing
with only a constant multiplicative overhead in the query complexity. 
Here, we show that the same reduction works in the private setting.

Suppose we want to test identity between an unknown distribution $p$ over $[n]$ and an explicit distribution $q$.
The reduction of \cite{Goldreich16} transforms the distribution $p$ into a new distribution $p'$,
over a domain of size $O(n)$,
such that if $p=q$ then  $p'$ is the uniform distribution, and if $p$ is far from $q$, $p'$ is also far from uniform. 
Specifically, the reduction defines a randomized mapping of  a sample $i \in [n]$ from $p$
to a sample $(j, a)$ from $p'$ that depends only on the explicit distribution 
$q$. This property is crucial as it allows us to show that the transformation preserves differential privacy, as the following theorem states.

\begin{restatable}{theorem}{thmIdentity}
\label{thm:goldreich-private}
Given an $(\epsilon, \xi)$-private uniformity tester using $s(n, \epsilon, \xi)$ samples, 
there exists an $(\epsilon, \xi)$-private tester for identity using $s  = s(6n, \epsilon/3, \xi)$ samples. 
\end{restatable}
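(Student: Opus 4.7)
The plan is to invoke Goldreich's reduction~\cite{Goldreich16} essentially verbatim and then check that the induced randomized pre-processing preserves differential privacy. I would begin by recalling the structure of the reduction: given the explicit distribution $q$ over $[n]$, it defines a randomized per-sample mapping $M_q : [n] \to [6n]$ such that if $X_1,\dots,X_s$ are i.i.d.\ from $p$, then $M_q(X_1),\dots,M_q(X_s)$ are i.i.d.\ from a distribution $p'$ over $[6n]$ satisfying (i) $p=q$ implies $p' = U_{6n}$, and (ii) $\|p-q\|_1 \geq \epsilon$ implies $\|p'-U_{6n}\|_1 \geq \epsilon/3$. The crucial structural fact is that the randomness used by $M_q$ on input $X_i$ depends only on $X_i$ and the public $q$, and is independent across different indices $i$.

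Given this, the identity tester is natural. Draw $s = s(6n, \epsilon/3, \xi)$ i.i.d.\ samples $X = (X_1,\dots,X_s)$ from $p$, apply $M_q$ independently to each coordinate to obtain a dataset $M_q(X)$ over $[6n]$, and feed $M_q(X)$ into the hypothesized $(\epsilon,\xi)$-private uniformity tester $\AA'$ with parameters $(6n,\epsilon/3,\xi)$. Correctness of the composite tester is immediate from properties (i)--(ii) together with the guarantee of $\AA'$; the sample-complexity bound is automatic by construction.

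The content of the proof is the privacy claim. Let $X$ and $Y$ be neighboring datasets differing only at position $i$. Since $M_q$ acts independently on each coordinate, I can couple the randomness so that the mappings at positions $j \neq i$ produce the same multiset $Z$ of $s-1$ elements of $[6n]$ under both $X$ and $Y$, while position $i$ produces independent draws $J \sim \mu_{X_i}$ and $J' \sim \mu_{Y_i}$ over $[6n]$, where $\mu_{X_i}$ is the law of $M_q(X_i)$. For any fixed $Z$ and any $j,j' \in [6n]$, the multisets $Z \cup \{j\}$ and $Z \cup \{j'\}$ are neighbors, so the $\xi$-privacy of $\AA'$ gives $\Pr[\AA'(Z\cup\{j\}) \in S] \leq e^\xi \Pr[\AA'(Z\cup\{j'\}) \in S]$ for every measurable $S$. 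Averaging the left side over $j \sim \mu_{X_i}$, then the right side over $j' \sim \mu_{Y_i}$, and finally taking expectation over $Z$ yields $\Pr[\AA(X) \in S] \leq e^\xi \Pr[\AA(Y) \in S]$, as desired.

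The part I expect to require the most care is exactly this last step: articulating why the randomized pre-processing, despite producing a distribution over output datasets rather than a deterministic function of the input, still preserves the $\xi$-DP guarantee. The key points are that the per-sample randomness is independent across coordinates (so neighboring inputs produce neighboring mapped datasets under a suitable coupling) and that the per-sample mapping depends only on the public $q$ (so it can be regarded as a data-independent pre-processing, which is a classical post/pre-processing invariance of differential privacy). Once this is stated cleanly, everything else is bookkeeping.
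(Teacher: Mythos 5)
Your proposal is correct and takes essentially the same approach as the paper: both isolate the key structural property that Goldreich's reduction acts per sample via a randomized mapping depending only on the public $q$, so that neighboring input datasets can be coupled to neighboring mapped datasets, and privacy then transfers by averaging over the mapping's randomness. The paper phrases this by fixing the random string $\pi$ of the whole mapping and observing that $X'_\pi$ and $Y'_\pi$ differ in at most one coordinate, while you couple the $s-1$ unchanged coordinates identically and let coordinate $i$ vary independently before taking a double average; these are cosmetically different couplings but the same argument.
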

In the proof, we give a reduction from identity testing to uniformity testing. The mapping itself depends only on $q$. This fact implies that it suffices to use any private tester for uniformity. The detailed proof of the theorem is in Appendix \ref{sec:proof_identity}.

\section {Private Uniformity Testing} \label{sec:uniformity}
In this section, we provide two sample-efficient private uniformity testers.
Our testers are private versions of two well-studied (non-private) testers,
due to Goldreich and Ron~\cite{GR00} and Paninski~\cite{Paninski:08}.
Paninski's uniformity tester~\cite{Paninski:08} relies on the number of unique elements in the sample,
while~\cite{GR00} relies on the number of collisions. Both testers
are known to be sample-optimal in the non-private setting~\cite{Paninski:08, DiakonikolasGPP16}.

We give private versions of both of these algorithms.
The sample complexity of our private Paninski uniformity tester is 
$ O(\sqrt{n}/\epsilon^2 +  \sqrt n/(\epsilon \sqrt \xi)) \;.$
Therefore, as long as $\xi  = \Omega (\epsilon^2)$, the privacy requirement increases 
the sample complexity by at most a constant factor. 

Unfortunately, the aforementioned tester only succeeds when its sample size
is smaller than the domain size $n$. To be able to handle the entire range of parameters, 
we develop a private version of the collisions-based tester from~\cite{GR00}.
Our private version of the collisions tester has sample complexity 
$\tilde{O} \left(\sqrt{n}/\epsilon^2 +\sqrt {n}/(\epsilon \xi) + 1/(\epsilon^2  \xi) \right)$.
Similarly, the effect of the privacy is mild as long as $\xi  = \Omega (\epsilon)$.

\subsection{Private Uniformity Tester based on Unique Elements} \label{paninski}
Here, we provide a private tester for uniformity based on the number of unique elements. 
The number of unique elements is (negatively) related to the number of collisions  and the $\ell^2$-norm of the distribution. 
Therefore, 
the greater the number of unique elements is, 
the more the distribution appears uniform. 
To make the algorithm private, we use the Laplace mechanism which adds a small amount of noise to the number of unique elements. 
Then, we compare the number of unique elements with a threshold to decide if the distribution is uniform or far from uniform. 
The noise is big enough to make the algorithm private, 
but it does not detract much from
the accuracy of the tester. We prove this formally in Theorem \ref{thm:uniformity_paninski}.

\begin{algorithm}
\caption{The Private Algorithm for the uniformity test}\label{alg:uniformity}
\begin{algorithmic}[1]
\Procedure{Private-Uniformity-Test}{$\epsilon, \xi$}
	\State{$s \gets 5\sqrt n/(\epsilon \sqrt \xi) + 6 \sqrt n/\epsilon^2$}
	\State{$C \gets \frac {s \epsilon^2}{\sqrt n}$}
	\State{$x_1, x_2, \ldots, x_s \gets$ $s$ samples drawn from $p$}
	\State{$K \gets$ the number of unique elements in $\{x_1, x_2, \ldots, x_s\}$}
	\State{$K' \gets K + \Lap(2/\xi)$}
	\If {$K' < \E_\UU[K] - C^2/(2\epsilon^2) $}
		\State Output \reject.
	\EndIf
	\State Output \accept.
\EndProcedure
\end{algorithmic}
\end{algorithm}

\begin{restatable}{theorem}{thmUniformityPaninski}
\label{thm:uniformity_paninski}
Given $s =  O (\sqrt n/(\epsilon \sqrt \xi) + \sqrt n/\epsilon^2)$ samples from distribution $p$ over $[n]$, Algorithm \ref{alg:uniformity} is an $(\epsilon, \xi)$-private tester for uniformity if $s$ is sufficiently smaller than $n$. 
\end{restatable}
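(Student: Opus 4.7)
The statistic $K$, equal to the number of elements of $[n]$ that appear exactly once in the sample, has $\ell^1$-sensitivity at most $2$: replacing a single sample can affect the indicators $\ind[j\text{ appears exactly once}]$ for at most two values of $j$, namely the removed sample value and the replacing value. Hence $|K(X) - K(Y)| \leq 2$ for neighbouring datasets $X,Y$, and by the Laplace mechanism $K' = K + \Lap(2/\xi)$ is $\xi$-differentially private. Comparing $K'$ against the data-independent threshold $\E_{U_n}[K] - C^2/(2\epsilon^2)$ is post-processing, so Algorithm~\ref{alg:uniformity} is $\xi$-private.

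\textbf{Accuracy via mean and variance of $K$.} Decompose $K = \sum_{i=1}^n Y_i$ with $Y_i = \ind[\text{element }i\text{ appears exactly once}]$, so $\E_p[K] = \sum_{i=1}^n s\,p(i)(1-p(i))^{s-1}$. Paninski's analysis, valid when $s \ll n$ so that $s\,p(i) = o(1)$ uniformly, yields the two estimates we need. First, a mean separation
$$\E_{U_n}[K] - \E_p[K] \;\geq\; \Omega\!\left(s^2 \|p - U_n\|_2^2\right) \;\geq\; \Omega(s^2\epsilon^2/n) \;=\; \Omega(C^2/\epsilon^2)$$
whenever $\|p - U_n\|_1 \geq \epsilon$: the first inequality follows from Taylor expanding $(1-p(i))^{s-1}$ to second order and checking that the cubic-and-higher remainder is dominated by the leading term, while the second is Cauchy--Schwarz, $\|p - U_n\|_2^2 \geq \|p - U_n\|_1^2 / n$. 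Second, a variance bound $\Var_p[K] = O(s)$ for any $p$, since $\sum_i \Var_p[Y_i] \leq \E_p[K] \leq s$ and the pairwise covariances contribute only lower-order terms in the sublinear regime.

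\textbf{Combining the estimates.} Let $T = \E_{U_n}[K] - C^2/(2\epsilon^2)$ and $Z \sim \Lap(2/\xi)$. A Laplace tail bound gives $|Z| = O(1/\xi)$ with probability at least $0.9$, and Chebyshev's inequality applied to the variance bound gives $|K - \E_p[K]| = O(\sqrt s)$ with probability at least $0.9$; by a union bound, $|K' - \E_p[K]| \leq \Delta$ with probability at least $2/3$, where $\Delta = O(\sqrt s + 1/\xi)$. In the completeness case $p = U_n$, the algorithm mistakenly rejects only if $\Delta \geq C^2/(2\epsilon^2)$. In the soundness case $\|p - U_n\|_1 \geq \epsilon$, the mean separation gives $\E_p[K] \leq \E_{U_n}[K] - c\, C^2/\epsilon^2$ for an absolute constant $c > 0$, so the algorithm mistakenly accepts only if $\Delta \geq (c - 1/2)\,C^2/\epsilon^2$. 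Both bad events would require $\sqrt s + 1/\xi \gtrsim C^2/\epsilon^2 = s^2\epsilon^2 / n$, which is precisely ruled out by $s = \Omega(\sqrt n/\epsilon^2 + \sqrt n/(\epsilon\sqrt\xi))$ as chosen in line~2 of the algorithm.

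\textbf{Main obstacle.} Privacy follows immediately from the sensitivity-$2$ observation and the Laplace mechanism. The delicate step is the variance bound $\Var_p[K] = O(s)$ together with the tight mean-gap estimate: one has to carry out Paninski's Taylor expansion carefully enough to confirm that neither the higher-order terms in $\E_p[K]$ nor the positive covariances between the $Y_i$'s spoil the $\Omega(s^2\epsilon^2/n)$ signal in the sublinear regime, so that the additional $O(1/\xi)$ Laplace perturbation can be absorbed into the same separation that drives the non-private analysis.
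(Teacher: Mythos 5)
Your privacy argument is correct and matches the paper: sensitivity of $K$ is at most $2$, the Laplace mechanism gives $\xi$-privacy, and thresholding is post-processing. The accuracy argument, however, has a genuine gap in the variance bound. You claim $\Var_p[K] = O(s)$ on the grounds that $\sum_i \Var_p[Y_i] \leq \E_p[K] \leq s$ and that ``the pairwise covariances contribute only lower-order terms.'' This is exactly backwards: the covariances are the whole point. For the uniform distribution $\sum_i \Var[Y_i] \approx s$, and it is the large \emph{negative} covariance contribution that cancels this down to Paninski's bound, quoted in the paper as Equation (4): $\Var[K] \leq (\E_{\UU}[K] - \E_{\PP}[K]) + s^2/n$. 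In the completeness case this is $s^2/n$, which for $s \ll n$ is far smaller than $s$.

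With only $\Var[K]=O(s)$ your Chebyshev step gives a deviation of order $\sqrt{s}$, and your final requirement $\sqrt{s} \lesssim C^2/\epsilon^2 = s^2\epsilon^2/n$ forces $s \gtrsim (n/\epsilon^2)^{2/3} = n^{2/3}/\epsilon^{4/3}$, not the claimed $\sqrt{n}/\epsilon^2$. Concretely, at $s = \Theta(\sqrt{n}/\epsilon^2)$ the signal $s^2\epsilon^2/n$ is $\Theta(1/\epsilon^2)$ while $\sqrt{s} = \Theta(n^{1/4}/\epsilon)$, so for constant $\epsilon$ and large $n$ the noise swamps the signal and both bad events you describe are \emph{not} ruled out. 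The paper avoids this by using the refined variance bound: in the uniform case the deviation is $O(s/\sqrt{n})$, which is below $s^2\epsilon^2/n$ precisely when $s \gtrsim \sqrt{n}/\epsilon^2$; in the far case the variance is bounded by the mean gap $D$ plus $s^2/n$, and $\sqrt{D} \ll D$ once $D \gtrsim 1$. Your treatment of the Laplace term ($1/\xi \lesssim s^2\epsilon^2/n$, giving the $\sqrt{n}/(\epsilon\sqrt{\xi})$ term) is fine and matches the paper; the fix you need is to import Paninski's Lemma 2 rather than the crude $O(s)$ bound.
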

By the properties of the Laplace mechanism, we know that $K$ is private. 
Then, we show that adding noise to $K$ does not harm the accuracy of the tester because the variance of the noise is small. Using the Chebyshev inequality, we show that, with high probability,
$Z$ concentrates well around its expected value given the size of the sample set. 
Thus, since there is a large enough gap between the expected values of $K$ which case the samples came from. 
The proof of the theorem is in Appendix \ref{sec:proof_uniformity_paninski}.

\subsection{Private Uniformity Tester via Collisions} \label{ssec:collisions}
In this subsection, we describe the private version of our collisions-based uniformity tester.
The main difficulty in turning this into a private tester is that the underlying statistic (number of collisions)
has very high worst-case sensitivity. Specifically, if the sample set contains $s$ copies 
of a given domain element, by changing just one of the copies to another element, 
the number of collisions drops by an additive $s$. 
So, if we add enough noise to the statistic to cover the sensitivity of $s$, it substantially degrades 
the tester accuracy. 

To overcome this issue, we add a simple pre-processing step to our tester.
We notice that the sensitivity of the number of collisions, $f(X)$, for sample set $X$, 
depends on the maximum frequency of the element in the sample set. Let $n_i(X)$ denote 
the number of occurrences of element $i$ in the sample set $X$, 
and let $n_{\max}(X)$ denote the maximum $n_i(X)$. 
We note that for two neighboring sample sets $X$ and $Y$, 
the difference of the number of collisions, $|f(X) - f(Y)|$, is at most $n_{\max}(X)$. 
Therefore, the sensitivity of $f$ is high on $X$'s with large $n_{\max}(X)$. 
However, if the underlying distribution is uniform, we do not expect any particular element to show up very frequently. 
Hence, if $n_{\max}(X)$ is high, the algorithm can output \reject regardless of $f(X)$. 
So, the final output of the algorithm does not change drastically on $X$ and $Y$, 
while the number of collisions varies a lot. 

This simple observation forms the basis for our modified algorithm. 
The algorithm uses two statistics:  $n_{\max}$ and $f$.
If $n_{\max}$ is too large, it outputs \reject. 
Otherwise, $f(X)$ determines the output. 
In the second case, since $n_{\max}$ is not too large, $f$ has bounded sensitivity. 
Therefore, we can make it private by adding a small amount of noise to it. 
The detailed procedure is explained in Algorithm \ref{alg:uniformity_col}. 
We show correctness in Theorem \ref{thm:uniformity_col}.

\begin{algorithm}
\caption{Private uniformity tester based on the number of collisions}\label{alg:uniformity_col}
\begin{algorithmic}[1]
\Procedure{Private-Uniformity-Test}{$\epsilon, \xi$}
	\State{$s \gets 
	\Theta\left(\frac {\sqrt{n}}{\epsilon^2} + \frac{\sqrt {n \log n}}{\epsilon \, \xi^{1/2} } + \frac{\sqrt {n \max(1, \log1/\xi)}}{\epsilon \, \xi } + \frac {1}{\epsilon^2\, \xi}  \right)
	$.
	}
	\State{Let $X = \{ x_1, x_2, \ldots, x_s \}$ be a multiset of $s$ samples drawn from $p$}
	\vspace{2mm}
	\State{$n_i(X) \gets |\{j| x_j \in x \mbox{ and }x_j = i\}|$}
	\vspace{2mm}
	\State{$n_{\max}(X) \gets \max\limits_{i} n_i(X) $}	
	\State{$\hat{n}_{\max}(X) \gets n_{\max}(X) + \Lap(2/\xi)$}	
	\vspace{2mm}
	\State{$f(X) \gets$ \col $(X)$}
	\vspace{2mm}
	\State{$\eta_f \gets \max\left( \frac {3s}{2n} , 12 \, e^2 \ln24\,n \right) +  ({2\ln 12 })/ \xi + 2 \max(\ln 3, \ln 3/\xi)/\xi$}
	\vspace{2mm}
	\State{ $T\gets \max\left( \frac {3s}{2n} , 12 \, e^2 \ln24\,n \right) +  ({2\ln 12 })/ \xi$}
	\vspace{2mm}
	\State{$\hat{f}(X) \gets f(X)  + \Lap(2 \,\eta_f/\xi)$ }
		\vspace{2mm}
	\If {$\hat{n}_{\max}(X) < T  \And \hat{f}(X) < \frac{6 + \epsilon^2}{6n}{s \choose 2}$}	\vspace{2mm}
		\State{$O \gets $ \accept.}\vspace{2mm}
	\Else\vspace{2mm}
		\State{$O \gets $ \reject.}\vspace{2mm}
	\EndIf
    \State{With probability 1/6, $O \gets \{\mbox{\accept}, \mbox{\reject}\} \setminus O$.}\vspace{2mm}
    \Comment{flip the answer with probability 1/6.}
    \State{Output $O$.}
\EndProcedure
\end{algorithmic}
\end{algorithm}

\begin{restatable}{theorem}{thmUniformityCol}
 \label{thm:uniformity_col}
Algorithm \ref{alg:uniformity_col} is an $(\epsilon, \xi)$-private tester for uniformity.
\end{restatable}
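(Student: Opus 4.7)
My plan is to establish privacy and accuracy of Algorithm~\ref{alg:uniformity_col} separately. For privacy, I view the algorithm as a post-processing of the joint release $M(X) = (\hat{n}_{\max}(X), \hat{f}(X))$ with independent Laplace noise on each coordinate. Since $n_{\max}$ has sensitivity $1$ and its noise scale is $2/\xi$, the release of $\hat{n}_{\max}$ is $\xi/2$-DP by the Laplace mechanism. The subtlety lies in $\hat{f}$: on neighboring $X, Y$, the sensitivity $|f(X) - f(Y)| \leq \max(n_{\max}(X), n_{\max}(Y))$ is not bounded a priori. I will split on the maximum frequency. When $\max(n_{\max}(X), n_{\max}(Y)) \leq \eta_f$, the local sensitivity of $f$ is at most $\eta_f$, so $\Lap(2\eta_f/\xi)$ makes the $\hat{f}$ release $\xi/2$-DP, and basic composition gives a pointwise $e^\xi$ density ratio on $M$, which suffices under post-processing. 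When the maximum frequency exceeds $\eta_f$, I exploit the calibrated gap $\eta_f - T = 2\max(\ln 3, \ln(3/\xi))/\xi$: the Laplace CDF tail gives $\Pr[\hat{n}_{\max}(\cdot) < T] \leq \tfrac{1}{2} e^{-(\eta_f - T - 1)\xi/2} \leq \min(1/6, \xi/6) \cdot e^{\xi/2}$ on both $X$ and $Y$ (since $|n_{\max}(X) - n_{\max}(Y)|\leq 1$, both frequencies are at least $\eta_f - 1$). Thus the pre-flip accept probabilities $p_X, p_Y$ are small, and the final $1/6$ random flip floors both $\Pr[g(\cdot) = \mathrm{accept}]$ and $\Pr[g(\cdot) = \mathrm{reject}]$ at $1/6$. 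A direct ratio calculation of $\tfrac{1}{6} + \tfrac{4}{6}p_X$ versus $\tfrac{1}{6} + \tfrac{4}{6}p_Y$, using the identity $(2\xi/3) e^{\xi/2} \leq e^\xi - 1$ (equivalent to $\xi/3 \leq \sinh(\xi/2)$), then yields the desired $e^\xi$ bound.

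For accuracy, I need the pre-flip answer correct with probability at least $3/4$, since the flip turns pre-flip correctness $\alpha$ into $\tfrac{1}{6} + \tfrac{4}{6}\alpha$ and $\tfrac{1}{6} + \tfrac{4}{6} \cdot \tfrac{3}{4} = \tfrac{2}{3}$. For completeness, when $p = U_n$, a balls-into-bins concentration bound controls $n_{\max}(X)$ below the first term of $T$ with high probability, and a Chebyshev bound on $f(X)$ (whose mean is $\binom{s}{2}/n$ under uniform) gives $f(X) < \tfrac{6+\epsilon^2}{6n}\binom{s}{2}$; Laplace tail bounds for the noises $\Lap(2/\xi)$ and $\Lap(2\eta_f/\xi)$ finish the completeness case. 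For soundness, when $\|p - U_n\|_1 \geq \epsilon$, Cauchy--Schwarz gives $\|p\|_2^2 \geq (1+\epsilon^2)/n$, so $\E[f(X)] \geq (1+\epsilon^2)\binom{s}{2}/n$ and the gap to the threshold is $\Omega(\epsilon^2 \binom{s}{2}/n)$. The four terms in $s$ are chosen to dominate, respectively: the variance of the non-private collision statistic~\cite{DiakonikolasGPP16}, the $\sqrt{n \log n}/(\epsilon \sqrt{\xi})$ contribution coming from the $\ln(24n)$ part of $\eta_f$ inside $\Lap(2\eta_f/\xi)$, the $\sqrt{n \max(1, \log(1/\xi))}/(\epsilon \xi)$ contribution from the $\ln(3/\xi)/\xi$ part of $\eta_f$, and the $1/(\epsilon^2 \xi)$ contribution ensuring that $s/n$ does not itself inflate $\eta_f$.

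The hard part, and main obstacle, is the large-$n_{\max}$ regime of the privacy analysis: because $f$ has unbounded sensitivity there, I cannot apply the Laplace mechanism as a black box. I must combine the Laplace tail bound at the calibrated gap $\eta_f - T$ with the $1/6$ floor provided by the random flip to bound the ratio of accept and reject probabilities on neighboring inputs by $e^\xi$; the constants in $\eta_f$, $T$, and the flip probability are tuned together so this works uniformly over all $\xi > 0$, without a case split on the regime of $\xi$. The accuracy analysis is more routine but still requires carefully matching each term of $s$ to the distinct fluctuation it dominates, especially distinguishing the two contributions coming from the two parts of $\eta_f$.
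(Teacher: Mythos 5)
Your proposal is correct and follows essentially the same route as the paper's proof: privacy via a per-pair case split on the effective sensitivity of the collision statistic (Laplace mechanism plus composition when $n_{\max}$ is small; the Laplace tail at the calibrated gap $\eta_f - T$ combined with the $1/6$-flip floor when it is large), and accuracy via Chebyshev for $f$, a balls-into-bins bound for $n_{\max}$, and Laplace tails, with the four terms of $s$ matched to the same sources of fluctuation. The only differences are cosmetic: you split on $\max(n_{\max}(X), n_{\max}(Y)) \lessgtr \eta_f$ rather than on $|f(X)-f(Y)| \lessgtr \eta_f$, so your tail bound carries an extra $e^{\xi/2}$ factor that you absorb with a slightly different elementary inequality than the paper's $1+\xi < e^\xi$.
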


To preserve the accuracy of the tester, we add a tiny amount of noise to $f$, which is not sufficient to make $\hat f$ fully private. However, we observe that the sensitivity of $f$ is high when $n_{\max}$ is high. So, the algorithm is likely to output \reject because of high $n_{\max}$ and regardless of $f$. We show that effect of $f$ on the output is small enough that the algorithm remains private. The proof of the theorem is in Appendix \ref{sec:proof_uniformity_col}. 

\section{Private Closeness Testing} \label{sec:closeness}
In this section, we give a private algorithm for testing 
closeness of two unknown discrete distributions. 
Our tester relies on the chi-squared type 
sample-optimal (non-private) closeness tester proposed in~\cite{Orlitsky:colt12} and 
analyzed in~\cite{CDVV14}. 
The closeness tester relies on the following statistic:

$$Z  \coloneqq \sum\limits_i \dfrac{(X_i - Y_i)^2 - X_i - Y_i}{X_i + Y_i} \;,$$
where $X_i$ is the number of occurrences of element $i$ in the sample set from $p$, 
and $Y_i$ is the number of occurrences of element $i$ in the sample set from $q$. 
The statistic $Z$ is chosen in a way so that its expected values in the completeness and soundness
cases differ substantially. The challenging part of the analysis involves a tight upper bound
on the variance, which allows  to show that $Z$ is well-concentrated after an appropriate number of samples.
More precisely, the following statements were shown in \cite{CDVV14}:
\begin{equation}\label{eq:EZ}
\E[Z] =  \sum\limits_i  \dfrac{\left(p(i) - q(i)\right)^2}{p(i) + q(i)}m \left( 1 - \dfrac {1 - e^{m(p(i) + q(i))}}{m(p(i) + q(i))}\right)
\geq \dfrac{m^2}{4n + 2m} \|p - q\|_1^2 \;.
\end{equation}
and
\begin{equation}\label{eq:VarZ}
\Var[Z] \leq 2\min\{m, n\} + \sum\limits_i 5m \dfrac{\left(p(i) - q(i)\right)^2}{p(i) + q(i)} \;.
\end{equation}
The private version of the above statistic is simple:
We add noise to the random variable $Z$ and work with the noisy statistic, 
denoted by $Z'$. We need to show that we still can infer the correct answer from $Z'$, 
and the noise does not incapacitate our tester. The main reason that this is indeed possible
is because the statistic $Z$ has bounded sensitivity.

Algorithm \ref{alg:private_closeness_test} is our private closeness tester 
and we prove its correctness in Theorem \ref{thm:closeness}.  

\begin{algorithm}
\caption{The private tester for closeness of two unknown distributions}\label{alg:private_closeness_test}
\begin{algorithmic}[1]
\Procedure{Private-Closeness-Test}{$\epsilon, \xi$}
	\State{$m \gets C \cdot \max \left( \dfrac{\sqrt n}{\epsilon^2}, \dfrac{n^{2/3}}{\epsilon^{4/3}}, \dfrac {\sqrt n}{\sqrt \xi \epsilon}, \dfrac 1 {\xi \epsilon^2}\right)$}
	\State{Draw $m$ samples from distributions $p$ and $q$.}
	\State{$X_i \gets$ the number of occurrences of the $i$-th element in the samples from $p$}
	\State{$Y_i \gets$ the number of occurrences of the $i$-th element in the samples from $q$}
	\State {$Z \gets \sum\limits_i \dfrac{(X_i - Y_i)^2 - X_i - Y_i}{X_i + Y_i}$ }
	\Comment{for $X_i + Y_i \not = 0$.}
	\State {$\eta \gets \Lap(8/\xi)$} 
	\vspace{2mm}
	\State {$Z' = Z + \eta$}
	\vspace{2mm}
	\State {$T \gets \dfrac {m^2 \epsilon^2}{8n + 4m}$}
	\vspace{2mm}
	\If {$Z' \leq T$ }
		\State{Accept}
	\Else 
		\State{Reject}
	\EndIf
\EndProcedure
\end{algorithmic}
\end{algorithm}

\begin{restatable}{theorem}{thmCloseness}
\label{thm:closeness}
Given sample access to two distributions $p$ and $q$, 
Algorithm \ref{alg:private_closeness_test} is an $(\epsilon, \xi)$-private tester for closeness of $p$ and $q$.  
\end{restatable}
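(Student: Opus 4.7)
\medskip
\noindent\textbf{Proof plan.} The proof splits into a privacy claim and an accuracy claim; since the accept/reject decision is a deterministic function of $Z'=Z+\Lap(8/\xi)$, by post-processing it suffices to show that releasing $Z'$ is $\xi$-differentially private. Accuracy will follow from Chebyshev's inequality applied to $Z'$, using the CDVV bounds \eqref{eq:EZ} and \eqref{eq:VarZ} together with the independent Laplace variance $\Var[\eta]=128/\xi^2$ and the threshold $T=m^2\epsilon^2/(8n+4m)$.

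\smallskip
\noindent\emph{Privacy.} Consider two neighboring datasets that differ in one sample, say a sample from $p$ that changes from element $a$ to element $b$. Only $X_a$ and $X_b$ change (each by $1$), so only two summands of $Z$ are affected. Setting $g(x,y)=(x-y)^2/(x+y)-1$ for $x+y\ge 1$ and $g(0,0)=0$, I would prove the elementary pointwise bound $|g(x+1,y)-g(x,y)|\le 4$ for all non-negative integers $x,y$ by rewriting
\[
 g(x+1,y)-g(x,y)=\frac{(2(x-y)+1)(x+y)-(x-y)^2}{(x+y+1)(x+y)}
\]
and estimating each of the two resulting fractions $\tfrac{2(x-y)+1}{x+y+1}$ and $\tfrac{(x-y)^2}{(x+y)(x+y+1)}$ by a universal constant, using $|x-y|\le x+y$. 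Summing the contributions of the two affected terms gives global sensitivity of $Z$ at most $8$, and hence $Z'=Z+\Lap(8/\xi)$ is $\xi$-DP by the standard Laplace mechanism; the output is $\xi$-DP by post-processing.

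\smallskip
\noindent\emph{Accuracy.} In the completeness case $p=q$, \eqref{eq:EZ} yields $\E[Z]=0$ and \eqref{eq:VarZ} yields $\Var[Z]\le 2\min(m,n)$, so $\E[Z']=0$ and $\Var[Z']\le 2\min(m,n)+128/\xi^2$. Chebyshev gives $Z'\le T$ with probability $\ge 2/3$ provided $T\ge C_0(\sqrt{\min(m,n)}+1/\xi)$ for some absolute $C_0$. In the soundness case $\|p-q\|_1\ge \epsilon$, \eqref{eq:EZ} yields $\E[Z]\ge 2T$, and the CDVV argument bounds the second term of \eqref{eq:VarZ} in terms of $\E[Z]$, yielding $\sqrt{\Var[Z']}\le T/2$ under the same condition on $m$; Chebyshev then gives $Z'>T$ with probability $\ge 2/3$. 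Finally, the separation condition $T\ge C_0(\sqrt{\min(m,n)}+1/\xi)$ is verified by substituting the algorithm's choice of $m$: in the regime $m\le n$ one has $T=\Theta(m^2\epsilon^2/n)$, and $T\ge\sqrt{m}$ forces $m\ge n^{2/3}/\epsilon^{4/3}$ while $T\ge 1/\xi$ forces $m\ge \sqrt{n}/(\sqrt{\xi}\epsilon)$; in the regime $m>n$ one has $T=\Theta(m\epsilon^2)$, and the analogous two inequalities give $m\ge \sqrt{n}/\epsilon^2$ and $m\ge 1/(\xi\epsilon^2)$. These are exactly the four terms appearing in the definition of $m$, so a sufficiently large universal constant $C$ makes both Chebyshev failure probabilities smaller than $1/6$.

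\smallskip
\noindent\emph{Main obstacle.} The only genuinely new step is the sensitivity bound for $Z$: the denominator $X_i+Y_i$ can be as small as $1$ while the numerator $(X_i-Y_i)^2$ is unbounded in absolute value, so a priori one might fear that a single-sample change could move $Z$ by an arbitrarily large amount. The saving fact is that $g(x,y)$ equals $|x-y|-1$ up to an $O(1)$ correction whenever $|x-y|$ is comparable to $x+y$, so incrementing $x$ by $1$ only perturbs the $O(1)$ correction. Extracting this cleanly from the algebraic identity above is the one step that requires care; everything else reduces to substituting Laplace moments into the CDVV concentration calculation and bookkeeping the two sample-complexity regimes $m\le n$ and $m>n$.
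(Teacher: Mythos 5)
Your proposal is correct and follows essentially the same route as the paper: a global sensitivity bound of $8$ for $Z$ via the same algebraic identity for the change in a single summand (two affected terms, each moving by at most $4$), the Laplace mechanism plus post-processing for privacy, and Chebyshev with the CDVV moment bounds split into the regimes $m\le n$ and $m>n$ to recover the four terms in the choice of $m$. The only phrasing to tighten is the soundness step: rather than $\sqrt{\Var[Z']}\le T/2$, one shows $\Var[Z']\le O(\min(m,n))+\E[Z']^2/36+O(1/\xi^2)$ and applies Chebyshev at deviation scale $\E[Z']/2$ (since $\E[Z']$ may greatly exceed $T$), which is exactly the ``bound the variance in terms of $\E[Z]$'' device you defer to.
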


Since the sensitivity of $Z$ is small, we can add a small amount of noise to it to make it private, using the Laplace mechanism.
Then, we show that adding the noise to $Z$ does not increase its variance drastically.
Finally, we prove by the Chebyshev inequality that, with high probability,
$Z$ concentrates well around its expected value given the size of the sample set.
Thus, since we have shown a large enough gap between the expected values of  $Z$ the in \accept and \reject cases, we can distinguish between a pair of identical distributions and a pair of distributions that are $\epsilon$-far from each other. 
The proof of the theorem is in Appendix \ref{sec:proof_closeness}.

\section{Experiments}
We provide an empirical evaluation of the proposed algorithms on synthetic data. 
All experiments were performed on a computer with a 1.6 GHz Intel(R) Core(TM ) i5-4200U CPU and 3 GB of RAM. 
The focus of the experiments is to find the minimum number of samples such that the type \rom{1} and type \rom{2} errors are small. 
In our synthetic trials, we show that for sufficiently large domain size $n$, 
our algorithms is likely to succeed with a {\em sublinear} number of samples. 
 
Specifically, given a domain size $n$, we find the (approximately) minimum number of samples 
such that the type \rom{1} and type \rom{2} errors are less than $1/3$.  
First, we pick a distribution (or a pair of distributions) that should be accepted with probability $2/3$, 
and another distribution that should be rejected with probability $2/3$. Then, we start with an initial number of samples $s$. 
For each case, we run the algorithm $r$ times on a sample set of size $s$. 
Then we estimate the accuracy of the algorithm for these sample sets. 
If the empirical accuracy is less than $2/3$ for either of the distributions, 
we increase $s$ appropriately and repeat the process until we find $s$ 
that results in an accuracy of at least $2/3$. 

\noindent {\bf Private Uniformity Testing.} 
We implemented Algorithm~\ref {alg:uniformity} to test the uniformity of a distribution in 
$\ell^1$-distance. Let $p$ be a distribution that has probability $(1+\epsilon)/n$ 
on half of the domain and probability $(1-\epsilon)/n$ on the other half. 
Clearly, $p$ is $\epsilon$-far from uniform.  
Since $p$ can be used to generate a tight sample lower bound~\cite{Paninski:08},
$p$ is in some sense the hardest instance to distinguish from the uniform distribution. 
We run the algorithm using samples from the uniform distribution and from $p$ with the following parameters: 
$\epsilon = 0.3$, $r = 300$, and $\xi = 0.2$. 
We determine the number of samples required for this tester to have accuracy 
at least $2/3$ for domain sizes $n$ ranging  
from $1$ million to 2 million (increasing $n$ by $10000$ at each step). 
The experimental results are shown in Figure~\ref{fig:uniformity}.

\begin{figure}[h] 
\centering 
\includegraphics[width=0.5\textwidth]{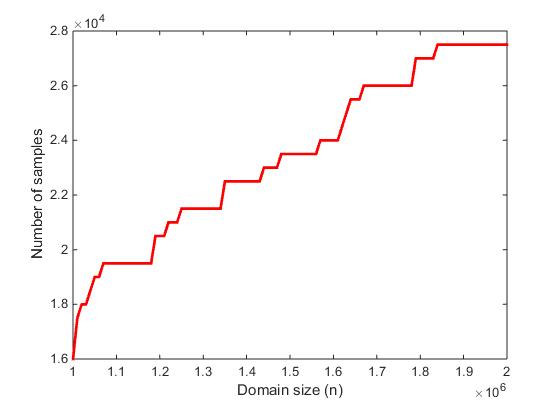}
\caption{ \label{fig:uniformity} The sample complexity of Algorithm~\ref {alg:uniformity} for private uniformity testing.}
\end{figure}

\noindent {\bf Private Identity Testing.} 
Testing uniformity is a special case of testing identity of distributions, 
and it is known to be essentially the hardest instance of the more
general problem. Similarly to~\cite{CaiDK17}, 
we consider testing identity to a distribution $q$, 
where $q$ is uniform on two disjoint subsets of the domain, 
of sizes $n_1 = n/1000$ and $n_2 = 999n/1000$. 
The total probability mass of the first subset is $0.6$ and the mass of the second one is $0.4$. 
The distribution $q$ can be viewed as a distribution which is ``heavy'' on a small
number of elements and ``light'' on the rest of the elements.
To build a distribution $p$ which is $\epsilon$-far from $q$, 
we tweak the probability of the elements in the second subset 
by $\pm \epsilon/n_2$. 
As explained in Section \ref{sec:identity}, to implement the identity test, 
we map our sample set $S$ to another sample set $S'$ on a slightly larger domain. 
Then, we use Algorithm \ref{alg:uniformity} to test the uniformity on the new domain using samples in $S'$. 
We set $\epsilon = 0.3$, $r = 200$, and $\xi = 0.2 $. 
We find the required number of  samples of this tester in order to 
have accuracy at least $2/3$, 
for $n$ from $1$ million to $2$ million 
(increasing $n$ by $10000$ at each step). 
The result is shown in Figure \ref{fig:identity}.

\begin{figure}[h] 
\centering 
\includegraphics[width=0.5\textwidth]{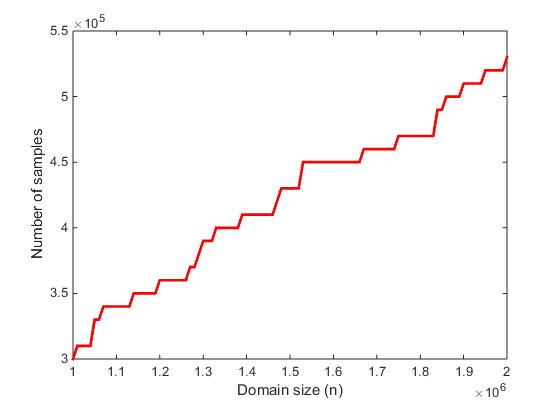}
\caption{\label{fig:identity} The sample complexity of private identity testing.}
\end{figure}

\noindent {\bf Private Closeness Testing.} 
We implemented Algorithm \ref{alg:private_closeness_test} to test closeness of two unknown distributions. 
Let $p$ be a distribution such that $n^{2/3}$ of the domain elements have 
probability $(1-\epsilon/2)/n^{2/3}$ (the ``heavy elements'') and $n/4$ ``light'' elements have 
probability $2 \epsilon/n$. 
Let $q$ be a distribution that has probability $(1-\epsilon/2)/n^{2/3}$ 
on the same set of heavy elements as $p$, 
and for a disjoint set of $n/4$ light elements assigns
probability $2 \epsilon/n$. Since the light elements are disjoint,
it is clear that $p$ is $\epsilon$-far from $q$. 
It has been shown in \cite{Batu13} and \cite{CDVV14}, 
that this pair of distributions yields a family of pairs of distributions
(via randomly permuting the names of the elements) which can be used to
give a tight lower bound on the sample complexity for the problem of
testing closeness.

To evaluate the accuracy of our algorithm, we use the tester to distinguish the following pairs: 
$(q, q)$ and $(p, q)$. 
We set $\epsilon = 0.3$, $r = 200$, and $\xi = 0.2 $. 
We find the required number of  samples of this tester in order to 
have accuracy at least $2/3$, for $n$ raging from $1$ million to $2$ million
(increasing $n$ by $10000$ at each step). The result is shown in Figure \ref{fig:closeness}.

\begin{figure}[h] 
\centering 
\includegraphics[width=0.5\textwidth]{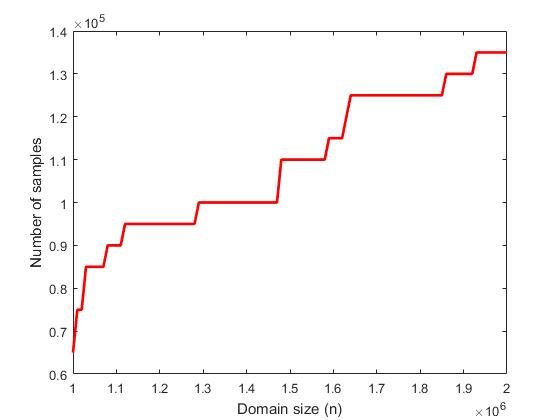}
\caption{\label{fig:closeness} The sample complexity of private closeness testing.}
\end{figure}

\bibliographystyle{plain}
\bibliography{allrefs}

\newpage

\appendix
\section {General Techniques in Differential Privacy} \label{sec:privacy_prelims}
A standard mechanism in the privacy literature, the {\em Laplace mechanism}, 
perturbs the output of an algorithm
by adding Laplace noise to make the output private. 
Assume the algorithm computes a function $f:[n]^s \rightarrow \mathbb{R}$. 
The amount of noise required
depends on the privacy parameter, $\xi$, 
and how much $f$ varies over two neighboring datasets. More precisely, 
this variation of $f$ is called {\em sensitivity} 
of the function and it is defined as: 

$$\Delta f = \max\limits_{\mbox{\scriptsize neighboring } x, y} |f(x) - f(y)| \;.$$
The noise is drawn from a Laplace distribution with parameter $b = \Delta f/\xi$. 
We denote the noise by $\Lap(b)$.  More precisely, 

$$\Pr[\Lap(b) = x] = \frac 1 {2b} \exp\left(-\frac{|x|}b\right).$$
The following is well-known:
\begin{lemma}[The Laplace mechanism (Theorem 3.6 in \cite{DworkR14})] \label{lem:Laplace_mechanism}
Assume there is an algorithm $\AA$ 
that on input $x$, outputs $f(x) + \Lap(\Delta f/\xi)$.  
Then $\AA$ is $\xi$-private. 
\end{lemma}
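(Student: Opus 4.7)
The plan is to bound the ratio of output densities at every point, then integrate over an arbitrary measurable set. Fix neighboring datasets $x, y \in [n]^s$ and write $b = \Delta f / \xi$. The output of $\AA$ on input $x$ is a random variable with probability density $g_x(z) = \frac{1}{2b}\exp(-|z - f(x)|/b)$, and similarly $g_y(z)$ for input $y$. So I would first argue that it suffices to control the pointwise ratio $g_x(z)/g_y(z)$ uniformly in $z$, because for any (measurable) $S \subseteq \mathbb{R}$, $\Pr[\AA(x) \in S] = \int_S g_x(z)\,dz$, and bounding $g_x(z) \le e^\xi g_y(z)$ pointwise immediately yields $\Pr[\AA(x) \in S] \le e^\xi \Pr[\AA(y) \in S]$.

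Next I would compute the ratio explicitly: $g_x(z)/g_y(z) = \exp\bigl((|z - f(y)| - |z - f(x)|)/b\bigr)$. Here the key estimate is the reverse triangle inequality, $\bigl||z - f(y)| - |z - f(x)|\bigr| \le |f(x) - f(y)|$, which together with the definition of sensitivity gives $|z - f(y)| - |z - f(x)| \le |f(x) - f(y)| \le \Delta f$. Substituting yields $g_x(z)/g_y(z) \le \exp(\Delta f / b) = \exp(\xi)$, which is precisely the pointwise bound needed.

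Combining the two steps, for any $S \subseteq \mathbb{R}$ we obtain
\[
\Pr[\AA(x) \in S] = \int_S g_x(z)\,dz \le \int_S e^\xi g_y(z)\,dz = e^\xi \Pr[\AA(y) \in S],
\]
which is the definition of $\xi$-differential privacy. Since $x$ and $y$ were arbitrary neighboring datasets, this completes the argument.

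There is really no major obstacle here; the proof is short and the only nontrivial step is invoking the reverse triangle inequality on the absolute-value exponents to tie the density ratio to $\Delta f$. The only subtlety worth flagging is that the argument as written treats $f$ as real-valued, so for vector-valued $f$ one would instead use the $\ell_1$ sensitivity together with independent coordinate-wise Laplace noise; but for the scalar statement as given, the pointwise density ratio argument above suffices.
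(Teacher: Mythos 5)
Your proof is correct and is exactly the standard density-ratio argument (reverse triangle inequality on the Laplace exponents, then integrate over $S$) that underlies Theorem 3.6 of the cited reference; the paper itself states this lemma without proof, deferring to that source. Nothing further is needed for the scalar case used here.
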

Note that the expected value of $\Lap(b)$ is zero. 
Therefore, the expected value of the output remains $\E[f(x)]$. 
Since we draw the noise independently from $x$, 
the variance of the output is increased by $\Var[\Lap(b)] = 2b^2$. 

Moreover, the following lemmas help us 
understand how the privacy guarantee changes 
if we process the output of one or more private algorithm. 
\begin{lemma}[Post-processing  (Proposition 2.1 in \cite{DworkR14})] \label{lem:post_processing} 
Assume $\AA$ is a $\xi$-private algorithm. 
Any algorithm that on input $x$ outputs a function $f(\AA(x))$ is also $\xi$-private. 
\end{lemma}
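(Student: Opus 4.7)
The plan is to prove the post-processing lemma by reducing to the definition of differential privacy applied to a carefully chosen event. I will treat the deterministic and randomized cases in turn, since the randomness of $f$ is the only mild subtlety.

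First, I would handle the case where $f : \mathcal{R} \to \mathcal{R}'$ is deterministic. Fix any measurable set $S \subseteq \mathcal{R}'$ and any pair of neighboring datasets $x, y$. Let $T = f^{-1}(S) \subseteq \mathcal{R}$ be the preimage of $S$ under $f$. Then by definition, $\{f(\AA(x)) \in S\}$ is exactly the event $\{\AA(x) \in T\}$, and similarly for $y$. Since $\AA$ is $\xi$-private and $T$ is a valid subset of the output space of $\AA$, I may apply the privacy guarantee directly:
\[
\Pr[f(\AA(x)) \in S] = \Pr[\AA(x) \in T] \le e^\xi \cdot \Pr[\AA(y) \in T] = e^\xi \cdot \Pr[f(\AA(y)) \in S],
\]
which is the desired inequality.

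Next, I would extend the argument to randomized post-processing. Here I view $f$ as a deterministic function $g(z, r)$ of the input $z$ and an independent source of randomness $r$ drawn from some distribution $\mu$ (independent of both $\AA$ and the dataset). For any fixed value of $r$, the map $z \mapsto g(z, r)$ is deterministic, so the previous paragraph gives
\[
\Pr_{\AA}[g(\AA(x), r) \in S] \le e^\xi \cdot \Pr_{\AA}[g(\AA(y), r) \in S].
\]
Integrating both sides against $\mu$ (using independence of $r$ from $\AA$ and the dataset), I get the same bound for the unconditional probability, which is precisely the privacy guarantee for $f \circ \AA$.

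The argument is essentially a one-line observation once framed correctly, so there is no real obstacle; the only step that requires care is to notice that the inequality in the definition of differential privacy must hold for \emph{every} output event, including those of the form $f^{-1}(S)$, and that randomization in $f$ can be absorbed into an independent coin source so the deterministic case applies pointwise.
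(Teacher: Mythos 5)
Your proof is correct and is essentially the standard argument that the paper defers to by citing Proposition 2.1 of Dwork--Roth: reduce to the deterministic case via preimages, then handle randomized post-processing by conditioning on the independent coins and integrating. No gaps.
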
 \label{lem:composition}

\begin{lemma}[Composition Theorem (Theorem 3.16 in \cite{DworkR14})] \label{lem:composition}
Let $\AA_i:[n]^s \rightarrow \mathbb R$ be a $\xi_i$-private algorithm for $i = 1, \ldots, k$. 
Any algorithm that on input $x$ outputs a function $f\left(\AA_1(x), \AA_2(x), \ldots, \AA_k(x)\right)$ is 
$\left(\sum_{i=1}^k \xi_i\right)$-private. 
\end{lemma}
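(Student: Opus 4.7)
The plan is to prove the composition theorem in two moves: first establish that the joint-output mechanism $M(x) \coloneqq (\AA_1(x), \AA_2(x), \ldots, \AA_k(x))$ is itself $\left(\sum_{i=1}^k \xi_i\right)$-private, and then invoke the post-processing lemma (Lemma \ref{lem:post_processing}) to conclude that $f(M(x))$ inherits the same privacy guarantee. Splitting the argument this way isolates the only quantitative step (accumulating the $\xi_i$'s) from the qualitative observation that post-processing is free.

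For the first step, fix neighboring datasets $x, y \in [n]^s$ and an arbitrary measurable set $S \subseteq \mathbb{R}^k$. Since the $\AA_i$'s are run on the same input but with independent internal randomness, the joint distribution of $M(x)$ is the product of the marginals. I would write
\[
\Pr[M(x) \in S] \;=\; \int_S \prod_{i=1}^k \Pr[\AA_i(x) = s_i] \, ds,
\]
and then apply the $\xi_i$-privacy of each individual mechanism pointwise: for every $s = (s_1, \ldots, s_k)$,
\[
\prod_{i=1}^k \Pr[\AA_i(x) = s_i] \;\le\; \prod_{i=1}^k e^{\xi_i}\,\Pr[\AA_i(y) = s_i] \;=\; e^{\sum_{i=1}^k \xi_i} \prod_{i=1}^k \Pr[\AA_i(y) = s_i].
\]
Integrating this inequality over $S$ yields $\Pr[M(x) \in S] \le e^{\sum_i \xi_i} \Pr[M(y) \in S]$, which is exactly the definition of $\left(\sum_i \xi_i\right)$-differential privacy for $M$.

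For the second step, note that $f(\AA_1(x), \ldots, \AA_k(x)) = f(M(x))$ is obtained by applying a (data-independent) function $f$ to the output of the $\left(\sum_i \xi_i\right)$-private mechanism $M$. Lemma \ref{lem:post_processing} then immediately gives the desired conclusion. The main subtlety I expect to flag, rather than a real obstacle, is the implicit independence assumption on the internal randomness of the $\AA_i$'s, which is needed to factor the joint density in the display above; without it, the product-of-marginals inequality would not apply and one would instead need a conditional-distributions argument (still valid, but more delicate). A secondary point worth mentioning is that the argument as written treats densities for notational ease, but the same pointwise-ratio inequality holds for discrete mechanisms by replacing the integral with a sum, so the statement covers both cases uniformly.
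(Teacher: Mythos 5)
The paper states this lemma without proof, citing it directly as Theorem 3.16 of \cite{DworkR14}, so there is no in-paper argument to compare against. Your two-step argument --- first showing the joint mechanism $(\AA_1(x),\ldots,\AA_k(x))$ is $\left(\sum_i \xi_i\right)$-private via the pointwise product of likelihood ratios, then invoking the post-processing lemma --- is the standard and correct proof of basic composition, and you correctly flag the one real hypothesis, namely that the $\AA_i$ use independent internal randomness so that the joint law factors (the adaptive variant would require the conditional-distribution argument you allude to).
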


\section{Generic Differentially Private Tester} \label{sec:generic}



In this section, we describe a simple generic method
to convert a non-private tester into a 
private tester with a multiplicative overhead
in the sample complexity. While this method is known 
in the differential privacy community, it is useful to contrast
its sample complexity with the (substantially smaller)
sample complexity of our testers in Sections~\ref{sec:identity},~\ref{sec:uniformity}, and~\ref{sec:closeness}.

Assume $\AA$ is a tester that draws $s(n, \epsilon)$ samples. 
The idea is to draw $m \cdot s(n, \epsilon)$ samples 
for a sufficiently large $m$, and from this sample,
to pick a random subset of size $s(n, \epsilon)$ samples. 
Then, the new tester runs $\AA$ on the randomly chosen subset and outputs
$\AA$'s output.
Given two sample sets that differ in one sample, the new private tester
will give the same output whenever a chunk that does not contain the differing
sample is chosen, which happens with probability at most $1/m$.   
This reduction to a non-private tester is described in Algorithm \ref{alg:general}. 
We formally show its correctness in Theorem~\ref{thm:general}. 

\begin{algorithm}
\caption{Reduction to a non-private tester}\label{alg:general}
\begin{algorithmic}[1]
\Procedure{General-Private-Tester}{$\epsilon, \xi$}
	\State{$m \gets \lceil \frac{6}{ \xi }\rceil$}
	\State{$s' \gets m \cdot s(n, \epsilon)$}
    \State{$x_1, x_2, \ldots, x_{s'} \gets $ $s'$ samples from $p$.}
    \State{$r \gets$ Pick a random number from $[m]$.}
    \State{$O \gets \AA\left(\{x_{(r-1)s + 1}, x_{(r-1)s + 2}, \ldots, x_{rs}\}\right)$.}
    \State{With probability 1/6, $O \gets \{\mbox{\accept}, \mbox{\reject}\} \setminus O$.}
    \Comment{flip the answer with probability 1/6.}
    \State{Output $O$.}
\EndProcedure
\end{algorithmic}
\end{algorithm}

\begin{theorem}\label{thm:general}
Let $\AA$ be an $\epsilon$-tester for property $\PP$ 
that uses $s(n, \epsilon)$ samples from distribution $p$ over $[n]$. 
Algorithm \ref{alg:general} is an $(\epsilon, \xi)$-private 
property tester  for property $\PP$ using
$O\left (s(n, \epsilon)  / \xi \right)$ samples.
\end{theorem}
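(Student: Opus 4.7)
The plan is to verify the two components of the claim separately: first that Algorithm~\ref{alg:general} is $\xi$-differentially private, and then that it outputs the correct answer with probability at least $2/3$ on any input distribution $p$. The sample bound $O(s(n,\epsilon)/\xi)$ is immediate from the choices $m = \lceil 6/\xi \rceil$ and $s' = m \cdot s(n,\epsilon)$.

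For privacy, I would take two neighboring datasets $X, Y \in [n]^{s'}$ that differ in exactly one sample, at some position $j$, which lies in a unique chunk indexed by $r^\star = \lceil j / s(n,\epsilon) \rceil$. Because $r$ is drawn uniformly from $[m]$ independently of the data, with probability $(m-1)/m$ we have $r \neq r^\star$, in which case the selected chunk is identical on $X$ and $Y$ and $\AA$'s pre-flip output distributions agree. Writing $q_X, q_Y$ for the overall pre-flip \accept{}-probabilities on $X$ and $Y$ (averaging over both $r$ and $\AA$'s internal randomness), this coupling gives $|q_X - q_Y| \leq 1/m$. The final flip step converts the pre-flip accept probability $q$ into $1/6 + (2/3) q$, so the post-flip accept probabilities $p_X, p_Y$ satisfy $|p_X - p_Y| \leq 2/(3m)$ and $p_X, p_Y \in [1/6, 5/6]$. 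Hence
\[
\frac{p_X}{p_Y} \;\leq\; 1 + \frac{|p_X - p_Y|}{p_Y} \;\leq\; 1 + \frac{2/(3m)}{1/6} \;=\; 1 + \frac{4}{m} \;\leq\; 1 + \frac{2\xi}{3} \;\leq\; e^\xi,
\]
where I used $m \geq 6/\xi$. The same bound holds for the \reject{} output by an identical calculation, yielding $\xi$-differential privacy.

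For accuracy, each chunk is an i.i.d.\ sample of size $s(n,\epsilon)$ from $p$, so $\AA$ correctly decides membership of $p$ in $\PP$ with probability at least $2/3$ on it. The flip would degrade this to $\tfrac{5}{6}\cdot\tfrac{2}{3} + \tfrac{1}{6}\cdot\tfrac{1}{3} = 11/18$, which is just below the required $2/3$; to recover the threshold I would first boost $\AA$ via a constant-factor majority vote of independent runs so that its per-chunk success probability is at least $3/4$. This costs only a constant factor in $s(n,\epsilon)$ and preserves the $O(s(n,\epsilon)/\xi)$ sample bound, and after the flip the final success probability is $\tfrac{5}{6}\cdot\tfrac{3}{4} + \tfrac{1}{6}\cdot\tfrac{1}{4} = 2/3$ as required.

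The only conceptual subtlety, and the step I would highlight as the main obstacle, is the role of the final coin flip. Without it, one output event could have probability arbitrarily close to zero (e.g.\ if $\AA$ is deterministic and confident on the non-differing chunks), which makes a multiplicative privacy bound impossible even though the total variation distance between the outputs on $X$ and $Y$ is $O(1/m)$. The $1/6$ flip imposes a uniform noise floor of $1/6$ on both output probabilities, which is exactly what converts the additive $O(1/m)$ gap into a multiplicative gap of the same order and thereby into $\xi$-privacy via the choice $m = \Theta(1/\xi)$.
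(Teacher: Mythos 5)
Your proposal is correct and follows essentially the same route as the paper: the paper also boosts $\AA$ to per-chunk error at most $1/6$ (via its amplification footnote) and then uses the $1/6$-flip to impose a $1/6$ noise floor on both output probabilities, converting the additive $O(1/m)$ gap between neighboring datasets into a multiplicative $e^\xi$ bound. Your privacy calculation is slightly sharper (you track the $2/3$ scaling of the pre-flip probabilities, getting $1+4/m$ rather than the paper's $1+6/m$), but this is a minor refinement, not a different argument.
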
 

\begin{proof}
Suppose $\AA$ is an $\epsilon$-tester for property $\PP$ 
that uses $s(n, \epsilon)$ samples. Without loss of generality, 
assume the tester $\AA$ errs with 
probability at most $1/6$\footnote{This can be achieved by the standard amplification method 
(i.e., running the tester $O(1)$ times and taking the majority answer). 
The new sample complexity grows by at most a constant multiplicative factor.}.
Since the output of $\AA$ is then 
flipped with probability $1/6$, by the union bound, 
the probability that Algorithm \ref{alg:general} 
errs is at most $1/3$, and it is thus an  $\epsilon$-tester for uniformity. 

To prove the privacy guarantee, 
let $m$ be $\ceil{6/\xi}$, 
and let $X = \{x_1, x_2, \ldots, x_{s'}\}$ and 
$Y = \{y_1, y_2, \ldots, y_{s'}\}$ be two 
sample sets of size $s' \coloneqq m \cdot s(n, \epsilon)$ 
that differ in exactly one sample. 
Without loss of generality, we assume they differ in the first sample: 
$x_i = y_i$ for $i>1$ and $x_1 \neq y_1$. 
Algorithm \ref{alg:general} picks a random 
number, $r$, in $[m]$ and feeds $\AA$ with the $r$-th chunk of 
size $s(n, \epsilon)$ from the input sample set. 
If $r \neq 1$, 
the distribution of the output is identical $X$ and $Y$. 
Let $T(X)$ indicate the output of Algorithm \ref{alg:general} on input $X$. 
More precisely, we have 

$$
\begin{array}{ll}
\Pr [T(X) = \mbox{\reject}] & =  \sum\limits_{i=1}^m  \Pr[T(X) = \mbox{\reject} | r = i] \cdot \Pr[r = i] 
\vspace{2mm}\\ 
& = \frac 1 m  \sum\limits_{i=1}^m  \Pr[T(X) = \mbox{\reject} | r = i] 
\vspace{2mm}\\ 
& = \frac 1 m \sum\limits_{i=2}^m  \Pr[T(Y) = \mbox{\reject} | r = i] + \frac 1 m\,\Pr[T(X) = \mbox{\reject} | r = 1]
\vspace{2mm}\\ 
& \leq \frac 1 m  \sum\limits_{i=1}^m  \Pr[T(Y) = \mbox{\reject} | r = i] 
+ \frac 1 m 
\vspace{2mm}\\ 
& \leq \Pr [T(Y) = \mbox{\reject}] + \frac 1 m \;.
\end{array}
$$
Since we change the output of $\AA$ with probability $1/6$, 
it is not hard to see that $\Pr[T(Y) = \mbox{\reject}]$ is at least $1/6$ for any input $y$. Thus, 

$$\dfrac {\Pr [T(X) = \mbox{\reject}] }{\Pr [T(Y) = \mbox{\reject}] } \leq 1 + \dfrac 6 { m} \leq 1 + \xi < e^{\xi} \;.$$
Similarly, we can show the above inequality when the output is \accept. 
Thus, the algorithm is $\xi$-private. 
\end{proof}

\section{Amplification of Confidence Parameter in the Private Setting} \label{sec:amp}

For convenience, throughout this paper we work with testing algorithms
that have failure probability at most $1/3$. Here we point out that this is without 
loss of generality, since a standard
amplification method also succeeds in the differentially private setting.

\begin{algorithm}
\caption{Amplified confidence parameter}\label{alg:amplification}
\begin{algorithmic}[1]
\Procedure{Amplifier}{$n, \epsilon, \xi$}
	\State{$m \gets 18 \lceil \ln\frac{1}{\delta}\rceil + 1$}
	\State{$s \gets s(n, \epsilon, \xi)$}
	\State{$ c \gets 0$} 
	\For {$i = 1, \ldots, m$}
    		\State{$X^{(i)} \gets $ a set of $s$ samples from $p$ }
		\State{Run $\AA$ using samples in $X^{(i)}$.}
		\If{$\AA$ accepts}
			\State{$c \gets c+1$}
		\EndIf
	\EndFor
	\If {$c \geq  m/2$}
		\State{Output \accept.}
	\Else
		\State{Output \reject.}
	\EndIf
\EndProcedure
\end{algorithmic}
\end{algorithm}

\begin{theorem}
Given $\AA$, an $(\epsilon, \xi)$-private tester for property $\PP$, 
such that $\AA$ uses $s(n, \epsilon, \xi)$ samples for any 
input distribution $p$ over $[n]$. 
Algorithm \ref{alg:amplification} 
is an $(\epsilon, \xi)$-private tester for 
property $\PP$, using $O\left (\log 1/\delta \cdot s(n, \epsilon, \xi)\right)$ 
samples from $p$,
that outputs the correct answer with probability $1 - \delta$.
\end{theorem}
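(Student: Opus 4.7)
The plan is to establish three properties of Algorithm~\ref{alg:amplification}: the claimed sample complexity, correctness with probability at least $1-\delta$, and $\xi$-differential privacy. The sample count is immediate, since we draw $m = 18\lceil\ln(1/\delta)\rceil + 1$ fresh batches, each of size $s(n,\epsilon,\xi)$, giving a total of $O(\log(1/\delta) \cdot s(n,\epsilon,\xi))$ samples.

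For correctness, I would view the $m$ invocations of $\AA$ on the independently drawn sample sets $X^{(1)}, \ldots, X^{(m)}$ as independent Bernoulli trials, each succeeding with probability at least $2/3$. Letting $c_i = \ind[\AA(X^{(i)}) \text{ is correct}]$, the final majority output is wrong only if $\sum_i c_i < m/2$, i.e., the count deviates from its mean (which is at least $2m/3$) by at least $m/6$. Hoeffding's inequality bounds this probability by $\exp(-m/18)$, which is at most $\delta$ by our choice of $m$. This works regardless of whether the correct answer is \accept or \reject.

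The crux of the argument is the privacy analysis. A naive appeal to the composition theorem (Lemma~\ref{lem:composition}) would inflate the privacy parameter to $m\xi$, which is far too much. The key observation is that the $m$ batches are \emph{disjoint}: viewing the entire input as the $ms$-tuple of samples drawn inside the for-loop, any two neighboring input datasets differ in exactly one sample, and that sample belongs to exactly one batch, say $X^{(i^\star)}$. Thus for every $i \neq i^\star$, the output $\AA(X^{(i)})$ has an identical distribution under both neighboring datasets. For $i = i^\star$, the $\xi$-privacy of $\AA$ guarantees that the two output distributions satisfy the $\xi$-DP multiplicative bound. Combining these with the independence of the $m$ runs' internal randomness, the joint distribution of $(\AA(X^{(1)}), \ldots, \AA(X^{(m)}))$ is $\xi$-DP with respect to the underlying input. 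Finally, Algorithm~\ref{alg:amplification}'s output is a deterministic function (the majority) of these $m$ outputs, so by post-processing (Lemma~\ref{lem:post_processing}) the entire algorithm is $\xi$-private.

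The main subtlety is articulating this \emph{parallel composition} step carefully rather than invoking standard sequential composition. The reason we can avoid the $m\xi$ blow-up is specifically that any neighboring pair of input datasets differs in only a single sample, and that sample lives in exactly one of the $m$ disjoint sub-computations; the remaining $m-1$ outputs are identically distributed on the two inputs and contribute nothing to the privacy cost. Everything else in the proof is routine.
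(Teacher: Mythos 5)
Your proof is correct, and the privacy argument is structured differently from the paper's, though both rest on the same key observation that the batches are disjoint and a neighboring change touches exactly one of them. The paper works bare-handed on the majority event: it writes $\Pr[T(X)=\mbox{\accept}]$ as $\Pr\bigl[\sum_{i=1}^m\sigma(X^{(i)})\geq 9\lceil\ln(1/\delta)\rceil+1\bigr]$, peels off the first iteration via the identity $\Pr[\sigma(X^{(1)})=1]\cdot\Pr[\sum_{i\geq2}\sigma=9\lceil\ln(1/\delta)\rceil]+\Pr[\sum_{i\geq2}\sigma\geq 9\lceil\ln(1/\delta)\rceil+1]$, replaces the first factor by $e^{\xi}\Pr[\sigma(Y^{(1)})=1]$ using $\AA$'s privacy, and reassembles the bound $\Pr[T(X)=\mbox{\accept}]\leq e^{\xi}\Pr[T(Y)=\mbox{\accept}]$. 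You instead establish that the entire $m$-tuple $(\AA(X^{(1)}),\ldots,\AA(X^{(m)}))$ is $\xi$-DP---a parallel-composition argument using independence of the runs and the fact that only one coordinate's distribution changes---and then close with a single application of post-processing (Lemma~\ref{lem:post_processing}) to the majority function. Your route is more modular and arguably cleaner, since it shows that \emph{any} post-processing of the $m$ outputs (not just majority) is $\xi$-private, which is a mildly more general statement; the paper's calculation is more self-contained and avoids needing to state a parallel-composition lemma explicitly. The correctness and sample-complexity parts match the paper essentially verbatim (Hoeffding giving $e^{-m/18}\leq\delta$).
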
 
\begin{proof}
First, we show that algorithm 
\ref{alg:amplification}
is $\xi$-private: 
Let $X$ and $Y$ be two sample sets of size $m\cdot s$ (where
$m$ and $s$ are as defined in 
algorithm \ref{alg:amplification})
 that differ only in one sample. Without loss of generality, assume they differ in the first sample. 
Therefore, $X^{(1)}$ and $Y^{(1)}$ differ in only one sample, 
and for $i > 1$, $X^{(i)}$ and $Y^{(i)}$ are identical. 
Hence, the distribution of the output of $\AA$ in all of the iterations 
except the first one is identical for both $X$ and $Y$. 
For the first iteration, the distribution over the output of $\AA$ cannot change drastically, 
because $\AA$ is a $\xi$-private algorithm. 
More formally, we have the following:

$$\Pr[\AA(X^{(i)}) = \mbox{\accept}] = \Pr[\AA(Y^{(i)}) = \mbox{\accept}]\quad \quad \quad \mbox{for } i > 1,$$
and 

$$\Pr[\AA(X^{(1)}) = \mbox{\accept}] \leq e^\xi \cdot \Pr[\AA(Y^{(1)}) = \mbox{\accept}].$$
An analogous argument holds when the output is \reject. 
Let $T(X)$ indicate the output of Algorithm \ref{alg:general} on input $X$.
 Let $\sigma(X^{(i)})$ be an indicator variable that is one if $\AA$ outputs \accept on input $X^{(i)}$ and zero otherwise.
Since iterations of the algorithm are independent, we have:

$$
\begin{array}{ll}
\Pr[T(X) = \mbox{\accept}] & = \Pr\left[\sum\limits_{i=1}^m \sigma(X^{(i)}) \geq 9 \ceil{ \ln \frac{1}{\delta}} + 1 \right]
\vspace{2mm} \\
& = \Pr\left[\sigma(X^{(1)}) = 1\right] \cdot \Pr\left[\sum\limits_{i=2}^m \sigma(X^{(i)}) =  9 \ceil{ \ln \frac{1}{\delta}}\right] + \Pr\left[\sum\limits_{i=2}^m \sigma(X^{(i)}) \geq  9 \ceil{ \ln \frac{1}{\delta}} + 1 \right]
\vspace{2mm} \\
& \leq e^\xi \cdot \Pr\left[\sigma(Y^{(1)}) = 1\right] \cdot \Pr\left[\sum\limits_{i=2}^m \sigma(Y^{(i)}) =  9 \ceil{ \ln \frac{1}{\delta}} \right] + \Pr\left[\sum\limits_{i=2}^m \sigma(Y^{(i)}) \geq 9 \ceil{ \ln \frac{1}{\delta}} + 1 \right]
\vspace{2mm} \\
& \leq e^\xi \cdot \Pr\left[\sum\limits_{i=1}^m \sigma(Y^{(i)}) \geq9 \ceil{ \ln \frac{1}{\delta}} + 1 \right]
\vspace{2mm} \\
& \leq e^\xi \cdot \Pr[T(Y) = \mbox{\accept}] \;.
\end{array}$$
An analogous inequality holds  
for the case where the output is \reject. 
Therefore, Algorithm \ref{alg:amplification} is $\xi$-private. 
Moreover, the output of the algorithm is wrong 
only if the majority of the invocations of $\AA$ 
return the wrong answer (i.e. more than $9 \ceil{ \ln {1}/{\delta}}$ times).
However, $\AA$ errs with probability at most $1/3$ by definition. 
By the Hoeffding bound,  the probability of outputting the wrong answer is

$$
\Pr\left[T(X) \mbox{ is wrong}\right] \leq e^{-2m/36} \leq \delta \;.$$

Thus, the total error probability is at most $\delta$. 
Therefore, Algorithm \ref{alg:amplification} is an $(\epsilon, \xi)$-private tester that outputs the correct answer with probability $1-\delta$. 
\end{proof}

\section{Proof of Theorem \ref {thm:goldreich-private}} \label{sec:proof_identity}
\thmIdentity*

\begin{proof}
Given $s$ samples from $p$, we map them to $s$ samples from $p'$ using the following mapping: 
\begin{enumerate}
\item Given sample $i$ from $p$, the process $F_1(i)$ flips a fair coin.
If the coin is Heads, $F_1(i)$ outputs $i$, otherwise, $F_1(i)$ outputs
$j$ drawn uniformly from $[n]$.  
Let $p_1$ denote the output
distribution of $F_1(i)$'s.   It is clear that $p_1(i) = (1/2) p(i) + 1/(2n).$
We define $q_1(i)$ similarly. 
\item  Let $m_i = \lfloor 3 n (q(i) + 1/n)\rfloor$. 
Given $j$ and the output of process $F_1(i)$ where $i$ is drawn from $p$, 
process $F_2(i)$ outputs $j$ with probability $m_i / \left(3 n (q(i) + 1/n)\right)$ and $n+1$ otherwise. 
Let $p_2$ denote the output distribution of the $F_2(i)$'s.   It is not hard to see that 

$$p_2(j) = p_1(j) \cdot \frac {m_j }{3 n (q(j) + 1/n)} = \frac 1 2 \cdot \left(p(j) + \frac 1 n\right) \cdot \frac {m_j }{3 n (q(j) + 1/n)}$$
for all $i \in [n]$, and $p_2(n+1) = 1 - \sum_{\ell=1}^n p_2(\ell)$.
We define $q_2(i)$ similarly. 
\item Given $k$, the output of process $F_2(i)$ where $i$ is drawn from $p$, 
we output  $F_3(i) = (k, a)$ such that $a$ is uniformly chosen
from $[6n q_2(k)]$. 
Note that for $k \in [n]$, $6n q_2(k)$ is equal to $m_k$ and it is an integer, 
so the set $[6n q_2(k)]$ is well-defined. 
We denote the distribution of $F_3(k)$'s as $p'$. It is not hard to see that if $p = q$, then

$$p'\left((k, a)\right) = \frac 1 2 \cdot \left(q(j) + \frac 1 n\right) \cdot \frac {m_j }{3 n (q(j) + 1/n)} \cdot \frac 1 {m_j} = \frac 1 {(6n)}$$
for $j \in [n]$. For $k = n+1$, 

$$6n \, q_2(n+1) = 6n -  6n\sum\limits_{\ell=1}^n \frac {m_{\ell}}{6n} = 6n - \sum\limits_{\ell=1}^n  {m_{\ell}}.$$ 
is also an integer. Therefore, $p'((n+1, a))$ is also $q_2(n+1)/(6n\,q_2(n+1)) = 1/6n$. 
\end{enumerate}
Thus, if $p = q$, then $p'$ will be a uniform distribution. Similarly, if $\|p-q\|_1 \geq \epsilon$ then 
$\|p'-U\|_1 \geq \epsilon/3$.
For a detailed proof, see \cite{Goldreich16}.

Then, we run the private uniformity tester using the samples from  $p'$, 
and output the answer of the tester. As shown in \cite{Goldreich16}, if 
$p$ is $\epsilon$-far from $q$, then $p'$ is $\epsilon/3$-far from uniform; 
and if $p$ is identical to $q$, then $p'$ is uniform. 
Therefore, the algorithm is an $\epsilon$-tester for identity. 
It suffices to show that the algorithm preserves differential privacy. 

Assume $X$ is the set of samples drawn from $p$, 
and denote by $\pi$ the bits of randomness that the mapping used 
to build $X'_\pi$, the set of samples from $p'$. Assume $Y$ is a sample set from $p$ 
that differs from $X$ in exactly one location. Then $X'_\pi$ 
also differs from $X'_\pi$  in at most one location, 
because each sample from $p$ is used in generating 
exactly one sample from $p'$. Let $\AA$ be the $(\epsilon, \xi)$-private uniformity 
tester and denote by $\AA(X'_\pi)$ the output of the tester on input $X'_\pi$. 
Since the algorithm is $\xi$-private, we have:

$$\Pr[\AA(S'_\pi) = \mbox{\accept}] \leq e^\xi \cdot \Pr[\AA(Y) = \mbox{\accept}] \;.$$
Let $T(X)$ denote the output of our algorithm. 
By construction, we have

\begin{eqnarray*}
\dfrac {\Pr[T(X) = \mbox{\accept}]} {\Pr[T(Y) = \mbox{\accept}]} 
&=& \dfrac{\sum_{\pi} \Pr[\AA( X'_\pi)  = \mbox{\accept} ] \cdot \Pr[\pi]}{\sum_{\pi} \Pr[ \AA(Y'_\pi) = \mbox{\accept}] \cdot \Pr[\pi]} \\
&\leq& \dfrac{\sum_{\pi} e^{\xi} \cdot \Pr[\AA( Y'_\pi)  = \mbox{\accept} ] \cdot \Pr[\pi]}{\sum_{\pi} \Pr[ \AA(Y'_\pi) = \mbox{\accept}] \cdot \Pr[\pi]}
\leq e^\xi \;.
\end{eqnarray*}
By the same argument, we can show the above inequality holds 
when the output is \reject. 
Therefore, our algorithm is an $(\epsilon, \xi)$-private tester. 
\end{proof}

\section{Proof of Theorem \ref{thm:uniformity_paninski}} \label{sec:proof_uniformity_paninski}

\thmUniformityPaninski*

\begin{proof}
Algorithm \ref{alg:uniformity} draws $s$ samples from the underlying distribution $\PP$. 
We use the Laplace mechanism to make the algorithm private: 
Let $K$ be the number of unique elements in the sample set. 
Since changing one sample in the sample set can change
the number of unique elements by no more than two, 
adding Laplace noise with parameter $2/\xi$ to $K$ makes it $\xi$-private. 
Using the composition theorem \ref{lem:composition}, 
the algorithm is $\xi$-private.   

To show the algorithm is an $\epsilon$-tester, 
we prove the statistic $K'$ concentrates well around its expected 
value in both the soundness and completeness cases. 
Using Lemmas 1 and  2 in \cite{Paninski:08}, 
we have the following inequalities for the number of unique elements:
\begin{equation}\label{eq:pan_dis}
\E_\UU[K] - \E_\PP[K] \geq \frac {s^2 \|p - U_n\|_1^2}{n}
\end{equation}
and 
\begin{equation}\label{eq:pan_var}
\Var[K] \leq \E_\UU[K] - E_\PP[K] + \dfrac {s^2}{n}.
\end{equation}

First, we show the algorithm is an
$\epsilon$-tester for uniformity. 
Then, we prove that it is $\xi$-private. 

Assume that the underlying distribution is the uniform distribution. 
Note that $\E[K] = \E[K']$. Then, by the Chebyshev inequality and Equation \ref{eq:pan_var} we have that:
$$
\begin{array}{ll}
\Pr[|K' - \E_\UU[K]| \geq \dfrac{C^2}{2\epsilon^2}] & = \Pr[|K' - \E_\UU[K']| \geq \dfrac{C^2}{2\epsilon^2}]
\vspace{2mm} \\ 
 & \leq \dfrac{4\epsilon^4  }{C^4}\ \Var[K']\vspace{2mm} \\ 

& \leq \dfrac{4\epsilon^4  }{C^4}\ (\Var [K] + \Var[\Lap(2/\xi)])\vspace{2mm} \\ 
& \leq \dfrac{4\epsilon^4  }{C^4}\ (\dfrac{s^2} n + \dfrac 8 {\xi^2})\vspace{2mm} \\ 
& \leq \dfrac 4{C^2} + \dfrac {32 \epsilon^4}{C^4 \xi^2}  \vspace{2mm} \\
& \leq \frac1 3 \;,
\end{array}
$$
where the last inequality comes from the fact that $C \geq \max (3.73 \ \epsilon /\sqrt{\xi}, 4.9)$. 
Thus, the probability of rejecting $\PP$ is less than $1/3$. 

Now suppose $\PP$ is a distribution which is $\epsilon$-far from uniform. 
Again by the Chebyshev inequality and Equation (\ref{eq:pan_var}) we have that:
$$
\begin{array}{ll}
\Pr[|K' - \E_\PP[K]|  \geq (\E_\UU[K] - \E_\PP[K])/2] & = \Pr[|K' - \E_\PP[K']|  \geq (\E_\UU[K] - \E_\PP[K])/2] 
\vspace{2mm} \\ 
& \leq \dfrac{4 \Var[K']}{(\E_\UU[K] - \E_\PP[K])^2}
\vspace{2mm}\\ 
& = \dfrac{4 (\Var[K] + \Var[\Lap(2/\xi)])}{(\E_\UU[K] - \E_\PP[K])^2}
\vspace{2mm}\\ 
& = \dfrac{4 (\Var[K] + 8 \xi^{-2})}{(\E_\UU[K] - \E_\PP[K])^2}
\vspace{2mm}\\ 
& \leq \dfrac{4 ( \E_\UU[K] - \E_\PP[K] + s^2/n + 8 \xi^{-2})}{(\E_\UU[K] - \E_\PP[K])^2}
\vspace{2mm}\\ 
& \leq \dfrac{ 4 }{ \E_\UU[K] - \E_\PP[K]} + \dfrac{4 s^2/n + 32/\xi^2}{(\E_\UU[K] - \E_\PP[K])^2} 
\vspace{2mm} \;.
\end{array}
$$
On the other hand by Equation \ref{eq:pan_dis}, $\E_\UU[K] - \E_\PP[K]$ is at least $C^2 /\epsilon^2$. Thus,
$$
\begin{array}{ll}
\Pr[|K' - \E_\PP[K]|  \geq (\E_\UU[K] - \E_\PP[K])/2] 
& \leq \dfrac{ 4 \epsilon^2}{C^2} + \dfrac {4s^2 \epsilon^4}{C^4n} + \dfrac{32 \epsilon^4}{C^4 \xi^2} 
\vspace{2mm}\\ 
& \leq \frac 1 3\vspace{2mm} \;,
\end{array}
$$
where the last inequality is true when $C \geq \max \left(6\epsilon, 6, 4.12 \epsilon/\sqrt \xi \right)$. 
Thus, the probability of accepting is less than $1/3$.
\end{proof}

\section{Proof of Theorem \ref{thm:uniformity_col}} \label{sec:proof_uniformity_col}

\thmUniformityCol*

\begin{proof}
Let $X = \{x_1, \ldots, x_s\}$ be a set of $s$ samples from $p$. Let $f(X)$ be the number of collisions in $X$. 
All variables are as defined in Algorithm~\ref{alg:uniformity_col}. First, we show that $\hat f(X)$ 
and $n_{\max}(X)$ concentrate well around their expected values. 

\begin{restatable}{lemma}{accurateHatF}
\label{lem:accurate_hat_f}

If $s$ is $\Theta\left(\frac {\sqrt{n}}{\epsilon^2} + \frac{\sqrt {n \log n}}{\epsilon \, \xi^{1/2} } + \frac{\sqrt {n \max(1, \log1/\xi)}}{\epsilon \, \xi } + \frac {1}{\epsilon^2\, \xi}  \right)$, the following holds with probability at least $11/12$:
\begin{itemize}
\item If $p$ is the uniform distribution, then $\hat f(X)$ is less than $\frac{1 + \epsilon^2/6}{n}{s \choose 2}$.
\item If $p$ is $\epsilon$-far from uniform, then $\hat f(X)$ is  greater than $\frac{1 + \epsilon^2/6}{n}{s \choose 2}$.
\end{itemize}
\end{restatable}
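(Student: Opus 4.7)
The plan is to control $\hat f(X) = f(X) + \Lap(2\eta_f/\xi)$ by separately bounding (i) the deviation of $f(X)$ from its expectation and (ii) the magnitude of the added Laplace noise, then leveraging the gap between the expected collision count under uniformity and under an $\epsilon$-far distribution. Using the standard identity $\E[f(X)] = \binom{s}{2}\|p\|_2^2$, the completeness case gives $\E[f(X)] = \binom{s}{2}/n$, while Cauchy--Schwarz combined with $\|p - U_n\|_1 \geq \epsilon$ yields $\|p\|_2^2 \geq (1+\epsilon^2)/n$ in the soundness case, so $\E[f(X)] \geq (1+\epsilon^2)\binom{s}{2}/n$. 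The threshold $\frac{1+\epsilon^2/6}{n}\binom{s}{2}$ therefore leaves a gap of $\Omega(\epsilon^2 s^2 / n)$ on each side, and the task reduces to showing that neither source of fluctuation exceeds a small constant fraction of this gap, with probability at most $1/24$ each.

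For the concentration of $f(X)$, I would invoke the classical variance bound $\Var[f(X)] = O(s^2 \|p\|_2^2 + s^3 \|p\|_3^3)$ going back to Goldreich--Ron and tightened via Poissonization in subsequent work. In the completeness case this collapses to $O(s^2/n + s^3/n^2)$, and Chebyshev gives $|f(X) - \E[f(X)]| \leq (\epsilon^2/24)\binom{s}{2}/n$ with probability $23/24$ provided $s = \Omega(\sqrt{n}/\epsilon^2)$; this yields the first summand of the stated sample complexity. In the soundness case, I would split on whether $n\|p\|_\infty$ is moderate (so the same Chebyshev argument carries through with the improved mean) or large (in which case $\E[f(X)]$ is so much larger than the threshold that a loose deviation bound still clears it).

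For the Laplace contribution, the tail estimate $\Pr[|\Lap(b)| > b\ln 24] \leq 1/12$ gives $|\hat f(X) - f(X)| = O(\eta_f/\xi)$ with probability $23/24$, so it suffices to enforce $\eta_f/\xi \leq c\,\epsilon^2 s^2 / n$ for an appropriate constant $c$. Decomposing $\eta_f$ into its three summands and solving for $s$ produces the remaining terms of the hypothesised sample size: the deterministic sensitivity cap $\max(3s/(2n),\,12e^2\ln 24n)$ contributes the $1/(\epsilon^2\xi)$ and $\sqrt{n\log n}/(\epsilon\sqrt{\xi})$ summands, while $(2\ln 12)/\xi$ and $2\max(\ln 3,\ln 3/\xi)/\xi$ jointly contribute the $\sqrt{n\max(1,\log 1/\xi)}/(\epsilon\xi)$ summand. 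A union bound over the concentration of $f(X)$ and the Laplace tail then yields the overall failure probability of at most $1/12$.

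The main obstacle will be the soundness analysis when $\|p\|_\infty$ is large: the variance of $f(X)$ can blow up as $s^3\|p\|_3^3$, potentially exceeding the squared gap, so a naive Chebyshev bound does not suffice. My plan is to resolve this through the dichotomy sketched above: when $\|p\|_\infty$ is moderate, direct Chebyshev goes through with the improved $\|p\|_2^2$; when it is large, $\E[f(X)]$ is enormous relative to the threshold and the threshold is cleared by a wide margin even with loose bounds on the variance and the Laplace noise. This mirrors the non-private analysis of the collision tester of \cite{DiakonikolasGPP16}, which I would adapt modulo the extra additive Laplace term.
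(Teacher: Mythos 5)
Your proposal is correct and follows essentially the same route as the paper's proof: the same expectation gap $\E[f(X)] = \binom{s}{2}\|p\|_2^2$ against the threshold $\frac{1+\epsilon^2/6}{n}\binom{s}{2}$, Chebyshev for the collision statistic with the hard soundness-case variance handled as in Lemma 4 of \cite{DiakonikolasGPP16}, a Laplace tail bound, and the constraint $\eta_f/\xi \lesssim \epsilon^2 s^2/n$ solved term-by-term to recover the stated sample complexity. The only cosmetic difference is that you union-bound two separate deviation events while the paper applies a single Chebyshev bound to $\hat f(X)$ using the combined variance $\Var[f(X)] + \Var[\Lap(2\eta_f/\xi)]$; the two are interchangeable up to constants.
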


\begin{proof} 
First, we compute the expected value of $\hat f(X)$. Since the expected value of the noise is zero, $\E[\hat f(X)]$ is equal to $\E[f(X)]$. So, if $p$ is uniform, then $\E[\hat f(X)]$ is ${s \choose 2}/n$, and if $p$ is $\epsilon$-far from uniform $\E[\hat f(X)]$ is at least $(1 + \epsilon^2) {s \choose 2}/n$. Let $\alpha$ satisfy $\|p\|_2^2 =  (1 + \alpha) /n$ and $\sigma$ be the standard deviation of $\hat f(X)$. We make an assumption that  ${ |\epsilon^2/6 - \alpha|} {s \choose 2}/n$ is at least $\sqrt{12} \sigma$. Below, this assumption concludes  the statement of the lemma. Later, we prove that the assumption holds for sufficiently large $s$. 

The conditions of the lemma hold if $\hat f(X)$ is closer to its expected value than the distance of the threshold, $\frac{1 + \epsilon^2/6}{n}{s \choose 2}$,  to its expected value. Using the Chebyshev inequality, the probability that the conditions do not hold is at most

$$
\begin{array}{ll}
\Pr\left [ \left|\hat f(X) -\E[f(X)]\right| > \left|\frac{1 + \epsilon^2/6} n {s \choose 2}- \E[f(X)] \right| \right ] 
& = \Pr\left [ \left|\hat f(X) -\E[f(X)]\right| > \frac{ |\epsilon^2/6 - \alpha|} n {s \choose 2}] \right ]
\vspace{2mm}\\
& \leq \Pr\left[ |\hat f(X) - \E[f(x)]| \geq \sqrt {12} \sigma \right] \leq \frac 1 {12} \;.
\end{array}
$$
Thus, it is sufficient to show that 
\begin{equation}\label{eq:sigma_bound}
\frac{ |\epsilon^2/6 - \alpha|} n {s \choose 2} \geq \sqrt{12}\sigma.
\end{equation}
Recall that $\sigma^2$ is equal to $\Var[f(X)] + \Var[\Lap(2\eta_f/\xi)]$, so $\sigma$ is at most $\sqrt {2\max(\Var[f(x)], 8 \eta_f^2/\xi^2)}$. Hence, we prove two stronger inequalities that yield to Equation  (\ref{eq:sigma_bound}):
\begin{equation}\label{eq:s_var_f}
s \geq \sqrt{\dfrac{20 \, n \, \sqrt{\Var[f(X)]} }{|\epsilon^2/6 - \alpha|}}, 
\end{equation}
and 
\begin{equation} \label{eq:s_var_noise}
s \geq \sqrt{\dfrac{28 \, n \, \eta_f }{\xi |\epsilon^2/6 - \alpha|}} \;.
\end{equation}
Using a similar proof to the proof of  Lemma 4 in \cite{DiakonikolasGPP16}, the inequality of Equation (\ref{eq:s_var_f}) holds for $s = c \sqrt n / \epsilon^2$ for sufficiently large constant $c$. 
Now, we focus on Equation (\ref{eq:s_var_noise}). If $p$ is a uniform distribution, $\alpha$ is zero, and if $p$ is $\epsilon$-far form being uniform, then $\alpha$ is at least $\epsilon^2$. Therefore, the denominator is at least $\epsilon^2/6$.  Solving Equation (\ref{eq:s_var_noise}) for $s$, we have:
$$s \geq c' \cdot \left(\frac {1}{\epsilon^2\, \xi} + \frac{\sqrt {n \log n}}{\epsilon \, \xi^{1/2} } 
+ \frac{\sqrt {n \max(1, \log1/\xi)}}{\epsilon \, \xi } \right) \;.$$
Hence, for sufficiently large constant $c'$, 
Equation (\ref{eq:sigma_bound}) holds and the proof is complete. 
\end{proof}

We have the following lemma:

\begin{lemma} \label{lem:accurate_hat_n} 
Let $X$ be a sample set of size $s$ from the uniform distribution over $[n]$.  With probability $11/12$, we have
$$\hat{n}_{\max} \leq \max\left(\frac 3 2 \cdot \frac s n , 12 \, e^2 \ln 24 n\right) + \frac {2\ln 12} \xi  \;.$$
\end{lemma}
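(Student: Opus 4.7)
The plan is to bound separately the two independent quantities whose sum is $\hat{n}_{\max}$: the true maximum count $n_{\max}(X)$ of a sample from the uniform distribution, and the Laplace noise $\Lap(2/\xi)$. I would show that each exceeds its prescribed bound with probability at most $1/24$, and then conclude by a union bound that the stated inequality fails with probability at most $1/12$.

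For the noise term, the standard one-sided Laplace tail $\Pr[\Lap(b) \geq t] = \frac{1}{2} e^{-t/b}$, applied with $b = 2/\xi$ and $t = 2\ln(12)/\xi$, yields probability exactly $1/24$. This matches the additive $2\ln(12)/\xi$ term appearing in the statement.

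For $n_{\max}(X)$, because $X$ is drawn from the uniform distribution $U_n$, each count $n_i(X)$ is distributed as $\mathrm{Bin}(s, 1/n)$ with mean $\mu = s/n$. I would split the analysis according to which of the two terms inside the $\max$ is dominant. In the regime where $3s/(2n) \geq 12 e^2 \ln(24n)$, I would apply the multiplicative Chernoff bound $\Pr[n_i \geq (1+\delta)\mu] \leq \bigl(e^\delta / (1+\delta)^{1+\delta}\bigr)^\mu$ with $\delta = 1/2$; this gives an exponential decay in $s/n$, and since $s/n$ is then $\Omega(\log n)$, a union bound over the $n$ elements yields $\Pr[n_{\max}(X) \geq 3s/(2n)] \leq 1/24$. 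In the complementary regime where $s/n$ is small, I would use the sharper upper tail $\Pr[n_i \geq k] \leq \binom{s}{k}(1/n)^k \leq (es/(kn))^k \cdot e^{-s/n}$ with $k = 12 e^2 \ln(24n)$; the $e^{-s/n}$ factor, combined with the union bound over $n$ elements, absorbs the $\log n$ overhead, and the constant $12 e^2$ has been engineered to be large enough that the total tail probability is again at most $1/24$.

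Combining the two regimes gives $\Pr\bigl[n_{\max}(X) > \max(3s/(2n),\, 12 e^2 \ln(24n))\bigr] \leq 1/24$. Since the Laplace noise is drawn independently of $X$, a final union bound with the Laplace tail estimate yields the claim with probability at least $11/12$.

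The main obstacle will be setting up the case split cleanly so that the two tail arguments jointly cover every value of $s/n$: one must verify that the constant $12 e^2$ is large enough for the moment-based tail bound to beat $1/(24n)$ precisely in the range where the multiplicative Chernoff bound with slack $\delta = 1/2$ is still too weak, and vice versa. Once this overlap is confirmed, the remaining calculations are routine applications of the two tail inequalities.
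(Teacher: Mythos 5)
Your decomposition is exactly the paper's: bound $n_{\max}(X)$ and the Laplace noise separately, get the noise tail of exactly $1/24$ from $\Pr[\Lap(2/\xi)\geq 2\ln(12)/\xi]=\tfrac12 e^{-\ln 12}$, handle the binomial counts by a case split between a multiplicative Chernoff bound and a $\binom{s}{k}n^{-k}\leq(es/(kn))^k$ moment bound, and finish with union bounds. The one substantive issue is the location of your case split, which is precisely the point you defer to "once this overlap is confirmed": if you split on $3s/(2n)\geq 12e^2\ln(24n)$, i.e.\ $s/n\geq 8e^2\ln(24n)$, then in the complementary regime you may still have $s/n$ as large as $8e^2\ln(24n)$, where $es/(kn)$ with $k=12e^2\ln(24n)$ is about $2e/3>1$ and the moment bound is vacuous. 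The overlap does exist, but it forces the split to sit at $s/n=12\ln(24n)$ (the paper splits at $s\lessgtr 12n\ln(24n)$): the Chernoff bound $\exp(-s/(12n))\leq 1/(24n)$ holds exactly when $s/n\geq 12\ln(24n)$, and the moment bound gives $(es/(kn))^k\leq e^{-k}\leq 1/(24n)$ exactly when $s/n\leq k/e^2=12\ln(24n)$; note also that the conclusion $n_{\max}\leq 3s/(2n)$ from Chernoff still implies the stated $\max(\cdot,\cdot)$ bound even when $3s/(2n)$ is the smaller term. Two minor points: the factor $e^{-s/n}$ you attach to $\binom{s}{k}(1/n)^k$ is not justified by that inequality and is not needed (the $e^{-k}$ with $k=12e^2\ln(24n)$ already dominates the union bound over $n$ elements), and you should dispose of the trivial case $s\leq k$, where $n_{\max}\leq s\leq k$ outright, as the paper does.
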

\begin{proof}
First, we show that $n_{\max}(X)$ is at most $\max\left(3s/(2n) , 12 \, e^2 \ln 24n\right)$ with probability at least 23/24. 
It suffices to show that all of the $n_i(X)$'s are smaller than this bound. 
Consider the following cases: First, assume $s$ is at most $12\, n \cdot \ln (24n)$. 
Let $k \coloneqq 12 \, e^2 \cdot \ln (24n) \geq e^2 s/n$. If $s \leq k$, then 
$n_{\max} (X)$ is at most $\max \left( 3s/(2n), 12 e^2 \ln 24 n \right)$.
Otherwise, 
$$\Pr\left[n_i(X) > k \right] \leq {s \choose k} \cdot \frac 1 {n^k} \leq \left(\frac{s\cdot e}{k}\right)^k \cdot \frac 1 {n^k} \leq e^{-k} \leq \frac 1 {24\,n}.
$$
Second, assume $s$ is greater than $12\, n \cdot \ln (24n)$. By the Chernoff bound,  we have
$$\Pr\left[n_i(X) > \frac s n \left(1 + \frac 12\right) \right] \leq \exp(-\frac s {12\,n}) \leq \frac 1 {24\,n} \;.
$$
Thus, 
$$\Pr\left[n_i(X) > \max \left(3s/(2n), 12\, e^2 \ln 24 n \right) \right] \leq \frac 1 {24\, n} \;.$$
Using the union bound, with probability $23/24$ all the $n_i(X)$'s, 
and consequently $n_{\max}(X)$, are smaller than $\max \left(3s/(2n), 12\, e^2 \ln 24 n \right)$.

Moreover, based on the properties of the Laplace distribution, we have
$$ \Pr \left[ \Lap(2/\xi) \geq \frac {2\ln 12} \xi   \right] \leq \frac {\exp(-\ln12) }2 \leq \frac 1 {24} \;.
$$
By the union bound, $n_{\max}(X)$ and $\Lap(2/\xi)$ are not exceeding the aforementioned bounds with probability $11/12$. 
Therefore, we have 
$$\Pr\left[ \hat{n}_{\max} < \max \left(3s/(2n), 12\, e^2 \ln 24 n \right) + \frac {2\ln 12} \xi\right] \geq \frac {11} {12} \;.$$
Thus, the proof is complete. 
\end{proof}

Given $X$, we define two probabilistic events, $E_1(X)$ and $E_2(X)$, to be 	
$$
E_1(X)\colon \hat{n}_{\max} < T \quad \quad E_2(X)\colon \hat{f}(X) <\dfrac{6 + \epsilon^2}{6n}{s \choose 2} \;,
$$
where the probability is taken over the randomness of the noise. Observe that $E_1(X)$ and $E_2(X)$ are independent. 
We use $\overline{E_1}(X)$ and $\overline{E_2}(X)$ to indicate the complementary events. 
Let $\MM(X)$ denote the output of the algorithm when the input sample set is $X$.  We set the output, $O$, to \accept, if both $E_1(X)$ and $E_2(X)$ are true, and at the end of the algorithm we may flip the output with small probability. 
Here, we prove the probability of outputting the correct answer is at least $2/3$. Consider two following cases:

\textbf{({\textit{i}}) $p$ is uniform:} Using Lemma \ref{lem:accurate_hat_n}, with probability at least $11/12$ we have that
$\hat n_{\max}$ is less than $T$. By lemma \ref{lem:accurate_hat_f}, 
$\hat{f}$ is less than ${s \choose 2}|1 + \epsilon^2/6|/n$ with probability at least $11/12$. 
Therefore, $\Pr[\overline{E_1}(X)]$ and $\Pr[\overline{E_2}(X)]$ are at most $1/12$. 
At the end of the algorithm, we flip the output with probability at most $1/6$. 
Using the union bound, we have
$$\Pr\left[\MM(X) = \mbox{\accept}\right] \geq 1 - \Pr\left[\overline{E_1}(X) \right] - \Pr\left[\overline{E_2}(X) \right]- \frac 1 6 \geq \frac 2 3 \;.$$

{\textbf{({\textit{ii}}) $p$ is $\epsilon$-far from uniform:}}  By lemma \ref{lem:accurate_hat_f}, 
$\hat{f}(X)$ is greater than  ${s \choose 2}|1 + \epsilon^2/6|/n$ with probability at least $11/12$, so $\Pr[E_2(X)]$ is at most $1/12$. We flip the output of the algorithm with probability at most $1/6$.  As a result, we have
$$\Pr\left[\MM(X) = \mbox{\reject}\right] \geq 1  - \Pr\left[E_2(X)\right]- \frac 1 6 \geq \frac 2 3  \;.$$
Thus, with probability at least $2/3$ we output the correct answer. 

%

In the rest of the proof, we focus on proving the privacy guarantee. It is not hard to see that $|n_{\max}(X) - n_{\max}(Y)|$ is at most one. By the properties of the Laplace mechanism in Lemma \ref{lem:Laplace_mechanism}, $\hat{n}_{\max}(X)$ is $\xi/2$-private. 
Assume $|f(X) - f(Y)|$ is at most $\eta_f$. Then, $\hat{f}(X)$ is  $\xi/2$-private as well. 
Since privacy preserved after post-processing (Lemma \ref{lem:post_processing}), 
both $E_1(X)$ and $E_2(X)$ are  $\xi/2$-private. 
Using the composition lemma \ref{lem:composition}, 
the output is $\xi$-private (by Lemma \ref{lem:composition}).

Now, assume $|f(X) - f(Y)|$ is greater than $\eta_f$. 
In this case, we show that $n_{\max}(X)$ has to be large. 
Therefore, the output is \reject with high probability regardless of $\hat f(X)$. 
Although $\hat f(X)$ is not private, it cannot affect the output drastically 
and the output remains private. We prove this formally below. 
Without loss of generality, assume we replace a sample $i$ in $X$ with $j$ to get $Y$. 
Thus, we have
$$
\begin{array}{ll}
\left| f(X) - f(Y) \right|& = \left|{n_i(X) \choose 2} + {n_j(X) \choose 2} - {n_i(Y) \choose 2} - {n_j(Y) \choose 2} \right|
\vspace{2mm} \\
& = \left| {n_i(X) \choose 2} + {n_j(X) \choose 2} - {n_i(X)-1 \choose 2} - {n_j(X) + 1 \choose 2}\right|
\vspace{2mm} \\
& = \left| n_i(X) - 1 - n_j(X) \right|
\vspace{2mm} \\
& \leq n_{\max} (X) \;,
\end{array}
$$
where the inequality comes from the assumption that there is at least one copy of $i$ in $X$. 
Therefore, $n_{\max}(X)$ is greater than $\eta_f$ as well. 
Since $T$ is even smaller than $\eta_f$, it is very unlikely that 
$\hat n_{\max}$ be smaller than the threshold $T$. 
More formally, by the properties of the Laplace distribution, we have:
\begin{equation} \label{eq:accept_bound}
\begin{array}{ll}
\Pr[E_1(X)] & = \Pr[\hat{n}_{\max}(X) \leq T] \leq \Pr[ \hat{n}_{\max}(X) - n_{\max}(X)  \leq T - \eta_f] 
\vspace{2mm}\\
& \leq \Pr\left[ \Lap(2/\xi) \leq - \frac { 2 \max(\ln 3, \ln 3/\xi)}{\xi}\right] 
\vspace{2mm}\\
& \leq  {\exp\left( - \max(\ln 3, \ln 3/\xi) \right)}/2 \leq \min(1/6, \xi/6) \;.
\end{array}
\end{equation}
Now, consider the case that the algorithm output \accept on input $X$.  It is not hard to see that 
\begin{equation}\label{eq:accValue}
\begin{array}{rl}
\Pr\left[\MM(X) = \mbox{\accept}\right]  & = (5/6) \cdot \Pr[E_1(X) \wedge E_2(X)] + (1/6) \cdot (1 - \Pr[E_1(X) \wedge E_2(X)])
\vspace {2mm} \\
& = (2/3) \cdot \Pr[E_1(X) \wedge E_2(X)] + 1/6
\vspace {2mm} \\
& = (2/3) \cdot \Pr[E_1(X)] \cdot \Pr[E_2(X)] + 1/6 \;.
\end{array} 
\end{equation}
Observe that since we flip the answer with probability $1/6$ at the end, 
$\Pr[\MM(X) = \mbox{\accept}]$ and $\Pr[\MM(Y) = \mbox{\accept}]$ are at least $1/6$. 
By this fact, Equation (\ref{eq:accept_bound}), and Equation (\ref{eq:accValue}), we have:
$$
\dfrac {\Pr[\MM(X) = \mbox{\accept}]}{\Pr[\MM(Y) = \mbox{\accept}]}   \leq \dfrac {\Pr[E_1(X)]  + 1/6}{1/6} \leq \xi + 1 < e^{\xi} \;. 
$$
Now, consider the case where the output of the algorithm is \reject on the input $X$.
Similar to Equation (\ref{eq:accept_bound}), we can prove $\Pr[E_1(Y)]$ is at most $\min(1/6, \xi/6)$. 
Similar to Equation (\ref{eq:accValue}), it is not hard to see that 
\begin{equation}\label{eq:rejValue}
\Pr\left[\MM(X) = \mbox{\reject}\right]  
 = (2/3) \cdot (\Pr[\overline {E_1}(X) \vee  \overline{E_2}(X)]) + 1/6 \;.
\end{equation}
If $\Pr[\MM(X) = \mbox{\reject}]$ is at most $\Pr[\MM(Y) = \mbox{\reject}]$, then clearly, we have:
$$\dfrac {\Pr[\MM(X) = \mbox{\reject}]}{\Pr[\MM(Y) = \mbox{\reject}]} \leq 1 < e^\xi \;.$$
Thus, assume $\Pr[\MM(X) = \mbox{\reject}]$ is less than $\Pr[\MM(Y) = \mbox{\reject}]$. 
Then, we have:
$$
\begin{array}{rl}
\dfrac {\Pr[\MM(X) = \mbox{\reject}]}{\Pr[\MM(Y) = \mbox{\reject}]}   
& = \dfrac{(2/3) \cdot (\Pr[\overline {E_1}(X) \vee  \overline{E_2}(X)]) + 1/6}{(2/3) \cdot (\Pr[\overline {E_1}(Y) \vee  \overline{E_2}(Y)]) + 1/6}
\vspace{2mm}\\
& \leq \dfrac{\Pr[\overline {E_1}(X) \vee  \overline{E_2}(X)]}{\Pr[\overline {E_1}(Y) \vee  \overline{E_2}(Y)] } \leq  \dfrac 1 {1 - \Pr[E_1(Y)]}
\vspace{2mm}\\
& \leq \dfrac 1 {1 - \min(1/6, \xi/6)} < 1 + \xi < e^\xi \;.
\end{array} 
$$
The second to last inequality is true since we showed previously that $\Pr[E_1(Y)]$ is at most $\min(1/6, \xi/6)$. 
Hence, the proof is complete. 
\end{proof}

\section{Proof of Theorem \ref{thm:closeness}} \label{sec:proof_closeness}

\thmCloseness*

\begin{proof}
Our proof has two main parts. 
First, we show that the algorithm outputs the correct answer with probability $2/3$. 
Second, we show that the algorithm is private. 

\noindent
\textbf{Proof of Correctness:} 
First, assume $p$ and $q$ are equal. 
In the algorithm, we compute $Z$ and add Laplace noise, $\eta$, to it. 
Then we compare it to threshold $T \coloneqq \epsilon^2 m^2/(8n + 4m)$.
Based on Equation (\ref{eq:EZ}), we have

$$\E[Z'] = \E[Z] + \E[\eta] = \E[Z].$$
Using the Chebyshev inequality and Equation (\ref{eq:VarZ}), 

$$\Pr[\mbox{outputting \reject}] = \Pr[Z' > T] \leq \dfrac{\Var[Z']}{T^2} \leq \dfrac{\Var[Z] + \Var[\eta]}{T^2} 
\leq \dfrac{2\min\{m, n\} + 128/\xi^2}{T^2} \leq \frac 1 3 \;,$$
where the last inequality is true for a sufficiently large universal constant $C$.

\smallskip

\textbf{Case 1:} Consider the case $m \leq n$. Then,

$$\dfrac{2\min\{m, n\} }{T^2} = \dfrac {2\, m\,(8n + 4m)^2  }{m^4 \epsilon^4} \leq \dfrac {2\, m\,(12n)^2  }{m^4 \epsilon^4} \leq 288 \, \left( \dfrac {n^{2/3}  }{ \epsilon^{4/3}} \cdot \frac 1 m\right)^3 \leq \frac{288}{C^3} \leq \frac 1 6 \;,$$
where the last inequality is true for $C$ greater than $12$. Moreover, 

$$\dfrac{128 \, \xi^{-2}}{T^2} \leq \dfrac{128\,(8n+4m)^2}{\xi^2 \, m^4 \, \epsilon^4}  
\leq \dfrac{128\,(12 \, n)^2}{\xi^2 \, m^4 \, \epsilon^4} 
\leq 18432 \left(\dfrac{\sqrt n}{\sqrt\xi \epsilon} \cdot \dfrac 1 m \right)^4 \leq \dfrac {18432}{C^4} \leq \frac 1 6 \;,
$$

where the last inequality is true for $C$ greater than $19$. 
Thus, for sufficiently large $C$, 
the probability of rejecting two identical distribution $p$ and $q$ is less than $1/3$.
\\
\textbf{Case 2:} Consider the case $n < m$. Then,

$$\dfrac{2\min\{m, n\} }{T^2} = \dfrac {2\, n\,(8n + 4m)^2  }{m^4 \epsilon^4} \leq \dfrac {2\, n\,(12m)^2  }{m^4 \epsilon^4} \leq 288 \, \left( \dfrac {\sqrt n  }{ \epsilon^2} \cdot \frac 1 m\right)^2 \leq \frac{288}{C^2} \leq \frac 1 6 \;,$$
where the last inequality is true for $C$ greater than $42$. 
Moreover, 

$$\dfrac{128 \, \xi^{-2}}{T^2} \leq \dfrac{128\,(8n+4m)^2}{\xi^2 \, m^4 \, \epsilon^2}  \leq \dfrac{128\,(12 \, m)^2}{\xi^2 \, m^4 \, \epsilon^2} \leq 18432 \left(\dfrac{1}{\xi \, \epsilon} \cdot \dfrac 1 m \right)^2 \leq \dfrac {18432}{C^2} \leq \frac 1 6 \;,$$
where the last inequality is true for $C$ greater than $136$. 
Thus, for sufficiently large $C$ the probability of rejecting two identical distribution $p$ and $q$ is less than 1/3. \\
\\
Now, suppose $p$ and $q$ are at least $\epsilon$-far from each other in $\ell^1$-distance. 
We show that in this case $Z'$ is greater than $T$ with high probability using Chebyshev's inequality. 
Based on Equation (\ref{eq:VarZ}), we bound the variance of $Z'$ in terms of the expected value of $Z'$. 
First, observe that, by Equation (\ref{eq:EZ}), we have that $\E[Z']$ 
is at least $C/6$ for any setting of parameters. Thus, for sufficiently large $C$, we can assume $\E[Z']$ is at least 360.    
Let $I_1$ be the set of all indices $i$ such that 
$(1 - (1- e^{-m(p_i + q_i)})/(m(p_i + q_i)))$ is greater $1/2$, 
and let $I_2$ be the set of remaining indices, i.e., $I_2 = [n] \setminus I_1$. 
By Equation (\ref{eq:EZ}), we have
\begin{align*}
\E[Z']^2 &= \left(\sum\limits_i \dfrac{(p_i - q_i)^2}{p_i + q_i}m \left(1 - \dfrac{1 - e^{-m(p_i+q_i)}}{m(p_i+q_i)}\right)\right)^2
\geq 
360 \, \sum\limits_i \dfrac{(p_i - q_i)^2}{p_i + q_i}m \left(1 - \dfrac{1 - e^{-m(p_i+q_i)}}{m(p_i+q_i)}\right)
\vspace{2mm}\\
& \geq 
360 \, \sum\limits_{i \in I_1} \dfrac{(p(i) - q(i))^2}{p(i) + q(i)}m \left(1 - \dfrac{1 - e^{-m(p(i)+q(i))}}{m(p(i)+q(i))}\right)
\geq 36 \sum\limits_{i \in I_1} 5m\dfrac{(p(i) - q(i))^2}{p(i) + q(i)} \;.
\end{align*}
On the other hand, for any $i$ in $I_2$, we can conclude that $m(p(i) + q(i))$ is less than $2$. 
Therefore, $m \frac{(p(i) - q(i))^2}{p(i) + q(i)}$ is at most 2. Thus, $\sum\limits_{i \in I_2} 5m \frac{\left(p(i) - q(i)\right)^2}{p(i) + q(i)}$ is at most $10\, n$. Since $\frac{\left(p(i) - q(i)\right)^2}{p(i) + q(i)}$ is less than $p(i) + q(i)$,  
$\sum\limits_{i \in I_2} 5m \frac{\left(p(i) - q(i)\right)^2}{p(i) + q(i)}$ is also less than $10 \, m$. 
Hence, we have
\begin{align*}
\Var[Z] & \leq 2\min\{m, n\} + \sum\limits_i 5m \dfrac{\left(p(i) - q(i)\right)^2}{p(i) + q(i)}
\vspace{2mm}\\
& \leq 2\min\{m, n\} + \sum\limits_{i \in I_1} 5m \dfrac{\left(p(i) - q(i)\right)^2}{p(i) + q(i)} + \sum\limits_{i \in I_2} 5m \dfrac{\left(p(i) - q(i)\right)^2}{p(i) + q(i)}
\vspace{2mm}\\
& \leq 12\min\{m, n\} + \dfrac { \E[Z']^2}{36}  \;.
\end{align*}
By Equation (\ref{eq:EZ}), the expected value of $Z'$ is at least $2T$.  
Using Chebyshev's inequality, we obtain
\begin{align*}
\Pr[\mbox{outputting ``Accept''}] 
& = \Pr[Z' \leq T] \leq \Pr[ \E[Z'] - Z' \geq \E[Z'] - T] \leq \Pr\left[ \E[Z'] - Z' \geq \dfrac{\E[Z'] } 2\right]
\vspace{2mm}\\
& \leq \dfrac{4\Var[Z']} {\E[Z']^2} \leq \dfrac{4(\Var[Z] + \Var[\eta])}{\E[Z']^2} \leq \dfrac{48 \min\{m, n\}}{\E[Z']^2} + \dfrac {10}\AA + \dfrac {512}{\E[Z']^2 \, \xi^2}
\vspace{2mm}\\
& \leq  \dfrac{48 \min\{m, n\}(4n + 2m)^2}{m^4 \epsilon^4} + \dfrac 1 9 + \dfrac {512(4n + 2m)^2}{m^4 \, \epsilon^4 \, \xi^2}  \leq \dfrac 1 3 \;,
\end{align*}
where the last inequality is true for sufficiently large $C$.
\medskip

\noindent
\textbf{Proof of Privacy Guarantee:} 
First, observe that the value of $Z$ does not change drastically over two neighboring datasets. 
More formally, we have the following simple lemma:
\begin{lemma}
The sensitivity of the statistic $Z$ is at most 8. 
\end{lemma}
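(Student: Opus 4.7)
My plan is to reduce the sensitivity bound to a single-coordinate estimate and then sum over the at most two indices whose summands can change. A neighboring dataset differs in exactly one sample, which is either drawn from $p$ or from $q$; by the symmetry of $Z$ in $(X_i, Y_i)$ it suffices to handle the former case. If the changed sample is $a$ in the original and $b$ in the neighbor (with $a \neq b$, since $a = b$ is trivial), then $X_a$ decreases by one, $X_b$ increases by one, and all other coordinates of $(X, Y)$ are unchanged. In particular, only the two summands indexed by $a$ and $b$ in the expression for $Z$ are affected.

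Writing $g(x, y) := ((x-y)^2 - x - y)/(x+y) = (x-y)^2/(x+y) - 1$ for $x+y > 0$, and extending by $g(0,0) := 0$ to encode the convention that terms with $X_i + Y_i = 0$ are dropped from the sum, the triangle inequality gives
\[ |Z - Z'| \leq \bigl|g(X_a - 1, Y_a) - g(X_a, Y_a)\bigr| + \bigl|g(X_b + 1, Y_b) - g(X_b, Y_b)\bigr|. \]
The whole argument therefore reduces to a one-step bound $|g(x \pm 1, y) - g(x, y)| \leq 4$ for all non-negative integers with $x + y \geq 1$ (and $x \geq 1$ in the decrement direction).

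For this bound I would change variables to $u := x-y$ and $v := x+y$ (so $|u| \leq v$ and $v \geq 1$) and compute
\[ g(x+1, y) - g(x, y) = \frac{(u+1)^2}{v+1} - \frac{u^2}{v} = \frac{2uv + v - u^2}{v(v+1)}. \]
Viewing the numerator as a concave quadratic in $u$ on $[-v, v]$, its maximum on that segment is $v(v+1)$ at $u = v$ and its minimum is $v(1 - 3v)$ at $u = -v$; dividing by $v(v+1)$ shows that the expression lies in $[(1-3v)/(v+1),\,1]$ and in particular has magnitude at most $3$. The decrement direction is handled by the identity $g(x-1, y) - g(x, y) = -[g(x'+1, y) - g(x', y)]$ with $x' = x-1$, which gives the same bound. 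The boundary transitions where $X_i + Y_i$ moves between $0$ and $1$ contribute only $g(1, 0) = 0 = g(0, 0)$ and are trivial.

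Summing the two affected coordinates then yields $|Z - Z'| \leq 3 + 3 \leq 8$, as claimed. The main obstacle is the univariate estimate on $(2uv + v - u^2)/(v(v+1))$: one has to check that the worst case, which occurs when $y \gg x$ (i.e.\ near $u = -v$), remains bounded by a constant independent of $v$, and that the degenerate cases where $X_i + Y_i$ crosses zero do not cause the bound to blow up. Once those checks are in place, the rest is the triangle inequality and bookkeeping.
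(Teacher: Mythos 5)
Your proposal is correct and follows essentially the same route as the paper's proof: isolate the two summands affected by replacing one sample, and bound each one-step change of $(X_i - Y_i)^2/(X_i+Y_i)$ by an absolute constant, handling the $X_i+Y_i=0$ boundary separately. Your substitution $u = x-y$, $v = x+y$ with the concavity argument yields a per-coordinate bound of $3$ where the paper's direct term-by-term expansion gives $4$, but the decomposition and the final triangle-inequality step are identical.
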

\begin{proof}
Assume two neighboring dataset $x$ and $y$.
 Let $Z^{(x)}$ and $Z^{(y)}$ be the statistic for $x$ and $y$ respectively.  We define $Z_i$ as follows:
 
$$
Z_i \coloneqq \left\{\begin{array}{ll}
\dfrac{|X_i + Y_i| - X_i - Y_i}{X_i + Y_i} & \quad \quad \mbox{if }X_i + Y_i \not = 0
\vspace{2mm}\\
0& \quad \quad \mbox{otherwise.}\end{array}
\right.
$$
We use a superscript $(x)$ or $(y)$ for $X_i$, $Y_i$, $Z_i$ 
to indicate the corresponding dataset we calculate them from. 
Since $x$ and $y$ are two neighboring datasets, 
there is a sample $i$ in the $x$ which has been replaced by $j$. 
Without loss of generality, assume $i$ was a sample from $p$. 
This implies that $X_i^{(x)} - X_i^{(y)} = 1$ and $Y_i^{(x)} = Y_i^{(y)}$. 

If $X_i^{(y)} + Y_i^{(y)}$ is zero, then $Z_i^{(x)}$ is one. 
Thus, the difference of $Z^{(x)}_i$ and $Z^{(y)}_i$ is one. 
Now, assume $X_i^{(y)} + Y_i^{(y)}$ is at least one. Then, we have
\begin{align*}
 \left|Z_i^{(x)} - Z_i^{(y)}\right| 
& =  \left|\dfrac{\left(X_i^{(x)} - Y_i^{(x)}\right)^2 }{X_i^{(x)} + Y_i^{(x)}} - \dfrac{\left(X^{(y)}_i - Y^{(y)}_i\right)^2}{X^{(y)}_i + Y^{(y)}_i}\right| 
\vspace{2mm}\\
& =  \left|\dfrac{\left(X_i^{(y)} - Y_i^{(y)} + 1\right)^2 }{X_i^{(y)} + Y_i^{(y)} + 1 } - \dfrac{\left(X^{(y)}_i - Y^{(y)}_i\right)^2}{X^{(y)}_i + Y^{(y)}_i}\right| 
\vspace{2mm}\\
& =  \left|\dfrac{\left(X_i^{(y)} - Y_i^{(y)}\right)^2 + 2 \left(X_i^{(y)} - Y_i^{(y)}\right) + 1 }{X_i^{(y)} + Y_i^{(y)} + 1 } - \dfrac{\left(X^{(y)}_i - Y^{(y)}_i\right)^2}{X^{(y)}_i + Y^{(y)}_i}\right| 
\vspace{2mm}\\
& =  \left|\dfrac{2 \left(X_i^{(y)} - Y_i^{(y)}\right) + 1 }{X_i^{(y)} + Y_i^{(y)} + 1 } - \dfrac{(X^{(y)}_i - Y^{(y)}_i)^2}{\left(X_i^{(y)} + Y_i^{(y)} + 1 \right) \cdot \left(X^{(y)}_i + Y^{(y)}_i\right)}\right| 
\vspace{2mm}\\
& \leq 2 + 1 + 1 \leq 4 \;.
\end{align*} 
Similarly, we can show $|Z_j^{(x)} - Z_j^{(y)}|$ is at most four. 
Hence, we can conclude that $|Z^{(x)} - Z^{(y)}|$ is at most eight.
\end{proof}

Therefore, using the property of the Laplace mechanism (Lemma \ref{lem:Laplace_mechanism}), 
$Z'$ is $\xi$-private. Using Lemma \ref{lem:post_processing} and the fact that the output 
of the algorithm is a function of $Z'$, we  conclude the algorithm is $\xi$-private.
\end{proof}

\end{document}